%
%
%
%
%

\documentclass[10pt,twocolumn,twoside]{IEEEtran}
\usepackage[comma,numbers,square,sort&compress]{natbib}
\usepackage{textcomp}
\usepackage{amsmath}
\usepackage{amssymb}
\usepackage{float}
\usepackage{amsthm}
\usepackage{graphicx}
\usepackage{subfigure}
\usepackage{epstopdf}
\usepackage{color}
\usepackage{verbatim}
\usepackage{tikz,times}
\usepackage{algorithm}
\usepackage{algorithmic}


        \DeclareMathOperator{\avg}{avg}
                \DeclareMathOperator{\net}{net}
    \DeclareMathOperator{\blockdiag}{blockdiag}
    \DeclareMathOperator{\diag}{diag}

    \newcommand{\norm}[1]{\left\lVert#1\right\rVert}

    \newtheorem{theorem}{Theorem}
    
    \newtheorem{lemma}{\textbf{Lemma}}
    \newtheorem{corollary}{\textbf{Corollary}}

    \newtheorem{definition}{\textbf{Definition}}
    \newtheorem{assumption}{\textbf{Assumption}}

    \newtheorem{remark}{Remark}


\begin{document}
\title{Navigating A Mobile Robot Using Switching   Distributed Sensor Networks}
\author{Xingkang He, Ehsan Hashemi, Karl H. Johansson
\thanks{ The work is supported by the Knut \& Alice Wallenberg Foundation and the Swedish Research Council. }
\thanks{X. He and K. H. Johansson are with Division of Decision and Control Systems, School of Electrical Engineering and Computer Science. KTH Royal Institute of Technology, Sweden ((xingkang,kallej)@kth.se).}%
\thanks{E. Hashemi is with 
Department of Mechanical Engineering, University of Alberta,   Canada (e-mail: ehashemi@ualberta.ca).}
}

\maketitle

\begin{abstract}
This paper proposes a method to navigate   a mobile robot  by estimating its state over a number of distributed sensor networks (DSNs) such that it can successively accomplish a sequence of tasks, i.e.,  its state  enters each targeted set and stays inside no less than the desired time, under a resource-aware, time-efficient, and computation- and communication-constrained setting.
We propose a new robot state estimation and navigation architecture, which integrates
an event-triggered task-switching feedback controller for the robot and a two-time-scale distributed state estimator for each  sensor. The architecture has  three major advantages over  existing approaches: First, in each task  only one DSN is active  for sensing and estimating the robot state, and for different tasks the robot can switch the active DSN by taking resource saving and system performance into account; Second,  the robot only needs to communicate with one active sensor at each time to obtain its state information from the active DSN;  Third, no online optimization is required. With the controller, the robot is able to accomplish a task by following a reference trajectory and switch to the next task when an event-triggered condition is fulfilled.  
With the estimator, each active sensor is able to estimate the robot state. 
Under proper conditions, we prove that the state estimation error and the trajectory tracking deviation are upper bounded by two time-varying sequences respectively, which play an essential role in the event-triggered condition. Furthermore, we find a sufficient condition for accomplishing a task and  provide an upper bound of running time for the task.  Numerical simulations of an indoor robot's localization and navigation are provided to validate the proposed architecture.

%
%
%
\end{abstract}

\begin{IEEEkeywords}
Navigation, distributed estimation,  event-triggered, task-switching, distributed sensor network,	  trajectory tracking
\end{IEEEkeywords}

\section{Introduction}
\label{sec:introduction}
Autonomous mobile robots with augmented state estimation and navigation systems over sensor networks are revolutionizing accurate navigation and controls for indoor and outdoor applications, such as service robots in dynamic environments, automated storage/retrieval with mobile robots in a warehouse \cite{yang2008multi, lenain2010mixed, atia2015integrated, chung2015indoor}. In addition, in cooperative intelligent transportation systems, reliable navigation through vehicle-to-infrastructure and over sensor networks, plays a vital role in reliable decision-making, safe motion planning, and controls for automated driving in urban settings \cite{li2020toward, kuutti2018survey, pare2019networked}.

Mobile robot navigation has been intensively studied through centralized frameworks \cite{flynn1988combining,kam1997sensor,desouza2002vision}, where a control center (e.g., the robot in autonomous navigation) collects the data of all sensors to estimate the robot state and then design feedback control signals such that the robot is able to reach targeted areas by following reference trajectories. Moreover, the center  sends the estimates and control signals to each sensor for further monitoring. Such timely information synchronization between the sensors and the  center 
inevitably increases bandwidth issues,  channel burden, and  energy consumption, thus these approaches are not effective  when the number of the deployed sensors is large. Moreover, although some approaches  (e.g., optimization-based-feedback temporal logic control \cite{belta2019formal} and the references therein) provide high-efficient navigation solutions by designing optimized feedback controllers, they  necessitate high online computation capability of the control center, which however is infeasible  when  the center has limited online computation capability and  is  inefficient in facing a large number of data. 

%
 
In contrast to the well-developed centralized estimation frameworks (e.g., \cite{yan2017networked,xu2020reset} and the references therein), state estimation over distributed sensor networks (DSNs), where each sensor only communicates with its neighboring sensors, shows advantages in structure robustness and parallel data processing, and thus draws more and more attention in recent years. An asynchronous distributed localization algorithm is proposed in \cite{huang2018asynchronous}  over DSNs with time-varying delays.  
For linear stochastic systems, distributed Kalman-type filters have been proposed (see \cite{battistelli2014kullback, yang2017stochastic,he2020distributed}  and references therein). For linear time-invariant   deterministic systems,  
distributed state observers  have been studied \cite{wang2017distributed, mitra2018distributed}. 
Although distributed estimation problems have been well investigated in the literature, few results are given on the design of distributed estimators based feedback controllers.  
Since   timely control signal synchronization between    control center and   DSNs induces a large amount of communications, new architectures able to alleviate   communications between the center and DSNs are expected.  Moreover,   the  number of sensors in a DSN can lead to a tradeoff between   resource consumption and estimation performance.  On one hand,  running a large-size DSN with many redundant sensors   leads to a wast of resources, especially  when the desired estimation performance can be   ensured with a  small-size DSN.  On the other hand, a too small-size DSN may   provide insufficient information for the desired estimation performance.  Thus, in different scenarios, 
how to timely choose proper DSNs balancing     resource consumption and   estimation performance needs investigation. Besides, how to coordinate the switching  of DSNs is of interest.

%
%

In multi-task robot navigation scenarios, such as   multiple trajectory tracking or multiple area coverage, it is certainly desired to timely execute the next task upon accomplishing the current task. This real-time task adjustment is also essential in localization and decision-making over complex environments \cite{galceran2013survey, lin2017distributed}. 
Event-triggered strategies have been studied with a favorable tradeoff between resource consumption and system performance.  
 A stochastic event-triggered sensor schedule  is presented in \cite{weerakkody2015multi} for state estimation in multi-sensor scenarios. 
 In \cite{xu2019event},   an event-based robust distributed synchronization is proposed for systems with disturbance uncertainties in order to lower the update rate/frequency.  Interested readers can refer to \cite{peng2018survey} for a literature survey of event-triggered estimation and control. In contrast, event-triggered multi-task switching is to provide   switching decision when some certainty level is reached, such as switching to the next task upon the accomplishment of the current task.  The certainty level is usually built on real-time performance evaluation, such as error bounds. However,  for multi-task navigation under uncertainties, the error bounds  may not be deterministic due to random switching time instants, such that the existing design and analysis approaches do not work well. 

\subsection*{Contributions}

This paper studies how to navigate a mobile robot   by estimating its state over a number of DSNs 
such that the robot is able to  successively accomplish a sequence of tasks, i.e.,  its state  enters each targeted set and stays inside no less than the desired time, in a resource-aware, time-efficient, and computation- and communication-constrained setting. 
To reduce energy consumption of running redundant sensors, the robot is  able to choose which DSN is active in state sensing and estimation for each task.
The  contributions of the paper are three-fold:
\begin{enumerate}
	
	\item We propose an integrated state estimation and navigation architecture (Fig.~\ref{fig:diagram}) consisting of a task-switching controller for the robot and a distributed estimator for each active sensor. In the architecture, i) the robot only broadcasts a key message to a subset of sensors upon the accomplishment of a task, and ii)  only one active sensor shares its estimates with the robot persistently in  a task. Thus, the architecture requires less communications between the robot and sensors than the centralized  schemes.

	\item  We design an event-triggered feedback controller (Algorithm~\ref{alg:B}), such that the robot is able to accomplish a task by following a reference trajectory and switch to the next task when an event-triggered condition is fulfilled.  Moreover, we propose a two-time-scale distributed state estimator (Algorithm~\ref{alg:A})  for each active sensor, such that the sensor is able to estimate the robot state by using its local measurements and the information from its  active neighbors.  Algorithms~\ref{alg:B} and \ref{alg:A} are integrated into the navigation architecture.
	

	\item  Under proper conditions, we prove that the state estimation error and  trajectory tracking deviation are upper bounded by two time-varying sequences respectively (Theorem~\ref{thm_error_bound}), which play an essential role in the event-triggered controller design and can be used for performance evaluation.    We find a sufficient condition for accomplishing a task and  provide an upper bound of running time for the task via solving an optimization problem (Theorem~\ref{thm_control}). 

\end{enumerate}


%

This paper builds on  a  conference work   presented in \cite{he2020event}, but substantial improvements have been made.   First, the active DSN has been extended from a fixed one to be a switching one. Second, the algorithms have been modified to adapt to the new setting.
Third, the results (Theorem~\ref{thm_error_bound})  are generalized  and new theoretical
results (Theorem~\ref{thm_control}) are added.   Fourth, detailed  proofs  are added, and  more
literature comparisons  and simulation results are provided.

The remainder of the paper is organized as follows. Section~\ref{sec:problem}
is on  problem formulation with a motivating example of indoor robot localization and navigation. Section~\ref{sec:architecture}  provides the design of the architecture, the controller, and the estimator.
Section~\ref{sec:theory} analyzes the main properties of the controller and the estimator.
Numerical
simulations for the motivating example are given in Section~\ref{sec:simulation}. Section~\ref{sec:conclusion} concludes   this paper.
Some proofs are given in the Appendix.

\textbf{Notations}. 
The operators $\diag\{\cdot\}$ and $\blockdiag\{\cdot\}$ means that scalar and   block matrices are arranged in diagonals, respectively. 
$\mathbb{R}^{n\times m}$ is the set of real matrices with $n$ rows and $m$ columns. $\mathbb{R}^n$ stands for the set of $n$-dimensional real vectors. $\mathbb{N}^+$ stands for the set of positive integers, and $\mathbb{N}=\mathbb{N}^+\cup 0.$
$I_{n}$ stands for the $n$-dimensional identity matrix and   subscript $n$ may  be omitted to ease the notation.
$\textbf{1}_N$ stands for the $N$-dimensional vector with all elements being one.  
For integers $m$ and $n$ with $m<n$, let $[m,n]=\{m,m+1,\dots,n\}$.
$A\otimes B$ is the Kronecker product of $A$ and $B$.  $\norm{x}$ is the 2-norm of a vector $x$. $\norm{A}$ is the induced 2-norm, i.e., $\norm{A}=\sup_{x\neq 0}\norm{Ax}/\norm{x}$.   
$\lambda_{\min}(A)$, $\lambda_2(A)$ and $\lambda_{\max}(A)$ are the minimum, second minimum and maximum eigenvalues of a real-valued symmetric matrix  $A$, respectively.
  $A\preceq B$ (resp. $A\prec B$) means $B-A$ is a positive definite matrix (resp. positive semi-definite matrix). Also, $A\preceq B$  (resp. $B\prec A$) is equivalent to $B\succeq A$ (resp. $B\succ A$).   
  $|S|$ denotes the cardinality of set $S$.

\section{Problem Formulation}\label{sec:problem}
\subsection{Motivating example}\label{subsec:example}

\begin{figure*}[t]
	\centering	
	\subfigure[A  schematic of robot localization and navigation with twelve powered sensors]{
		\includegraphics[trim={3.5cm 4.9cm 11.7cm 1.5cm},clip,scale=0.3]{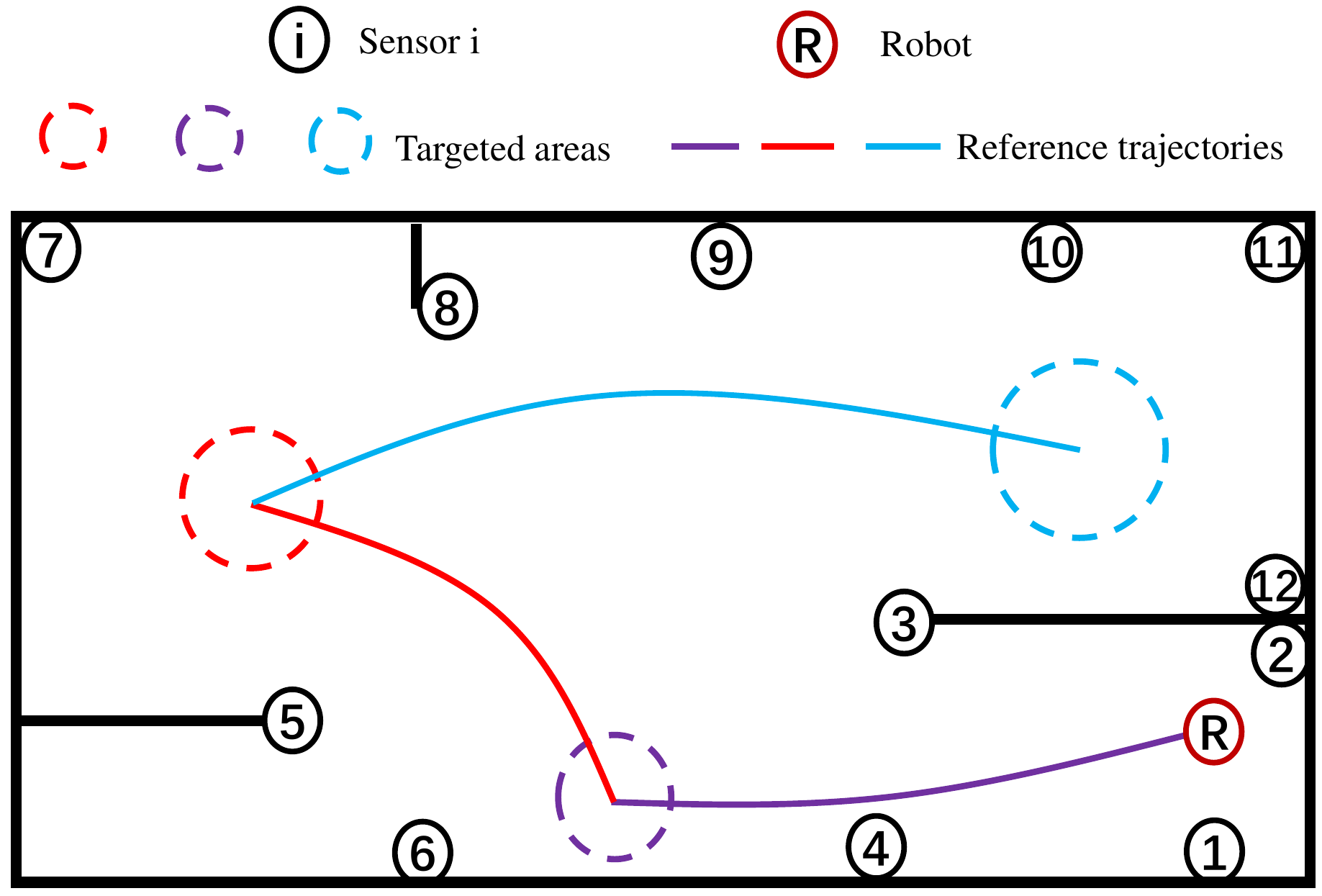}
	}		\hskip 5pt
	\subfigure[Active graph for task 1]{
		\raisebox{0.8\height}{\includegraphics[trim={0.4cm 11.9cm 19.2cm 2.9cm},clip,scale=0.25]{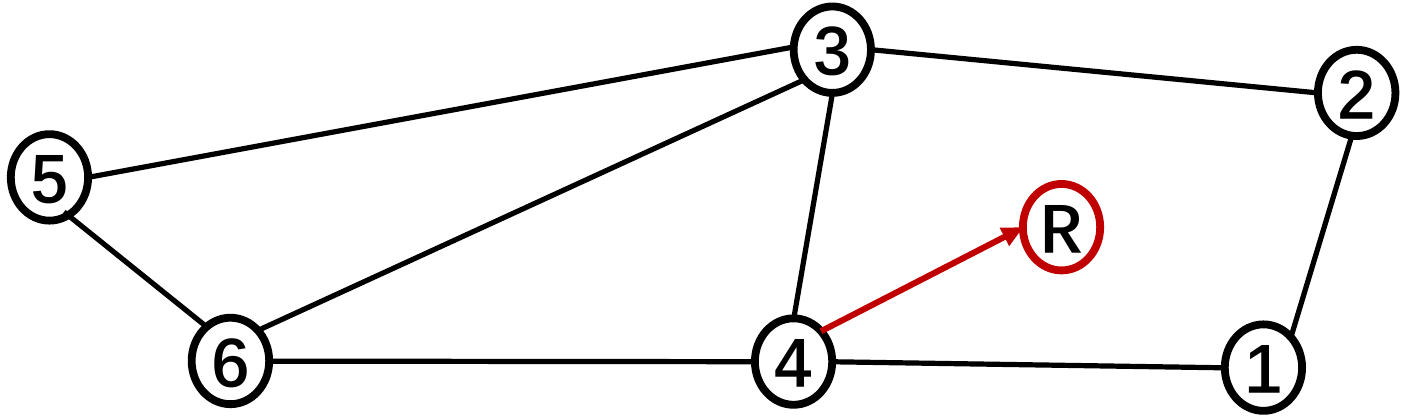}}
	}
\hskip 5pt
	\subfigure[Active graph for task 2]{
		\includegraphics[trim={0.8cm 0.7cm 20.1cm 8.7cm},clip,scale=0.25]{robot_network.pdf}
	}
\hskip 5pt
	\subfigure[Active graph for task 3]{
		\raisebox{0.25\height}{\includegraphics[trim={15.7cm 6.4cm 3.3cm 5cm},clip,scale=0.25]{robot_network.pdf}}
	}
	\caption{In a building with twelve powered sensors, a mobile robot aims at successively accomplishing three tasks, i.e., moving into  three targeted areas in order and staying inside at least for the desired time
		by following three reference trajectories, respectively, as shown in (a). The robot is able to activate a subset of the twelve sensors for the localization  in each task. The active sensors in each task form a DSN to sense and estimate the robot state collaboratively in order to  provide the localization information for the robot, as shown in (b)--(d) where the robot only receives messages from a single active sensor. 		 }
	\label{fig:robot}
\end{figure*}

 Since satellite navigation systems (e.g., GPS) may be unavailable or  significantly prone to error for indoor robot localization and navigation, building system designers can deploy 
 a large number of heterogeneous powered sensors (e.g., camera, radar, lidar) in different areas of the buildings.  The sensors can form DSNs to sense and estimate the robot state in a structure-robust and communication-efficient way.
 Under  resource and physical constraints, if an mobile robot is able to control the state of sensors (being active/inactive for sensing and estimation),   it can adaptively activate subsets of sensors for accomplishing a sequence of tasks, e.g., safety patrol,  goods delivery, and consumer guidance. 
 
 An example is provided in Fig. \ref{fig:robot} to illustrate the problem of the paper. In this example, we aim to navigate a mobile robot by using twelve powered sensors such that the robot can finish three tasks (i.e., enter three targeted areas and stay inside no less than the desired time)  successively by following the preset reference trajectories. 
  Since each sensor only has limited observability, 
 the twelve sensors form three DSNs to monitor the robot state collaboratively. 
 There are three periods corresponding to three tasks. In each period,  only one DSN is active in sensing,   communication  and estimation, and the rest of sensors are inactive. In the first (resp. second/third) period, the robot communicates with sensor 4 (resp. 5/9) from the DSN in (b) (resp. (c)/(d)) in order to obtain its state estimates. 
 The problem is  how to effectively use and switch the DSNs in order to navigate the robot for the accomplishment of the three tasks against system uncertainties.

\subsection{Model of the controlled robot }
Consider a mobile robot with the following   system dynamics
\begin{align}\label{sys}
\begin{split}
&x(t+1)=Ax(t)+Bu(t)+w(t), \quad t\in\mathbb{N}^+
\end{split}
\end{align}
where $x(t)\in\mathbb{R}^n$ is the unknown robot state, $u(t)\in\mathbb{R}^p$ the control input,
$w(t)\in\mathbb{R}^n$  the   bounded process uncertainty, i.e., there is a scalar $q_w\geq 0$, such that 
$\norm{w(t)}\leq q_w$, which can be used to model linearization error, unknown disturbance and uncertain dynamics. 
$A\in\mathbb{R}^{n\times n}$ and $B\in\mathbb{R}^{n\times p}$ are known system matrices. For more information of linear modeling of mobile robots as in \eqref{sys}, readers can refer to \cite{guerra2004linear}.

\begin{assumption}\label{ass_stabilization}
	The robot system is stabilizable, i.e., there is a matrix $K\in\mathbb{R}^{p\times n}$ such that $A+BK$ is Schur stable, i.e., all eigenvalues of $A+BK$ have norm strictly less than one.
\end{assumption}

We aim to design $u(t)$  such that the robot is able to accomplish a sequence of tasks defined  as follows.
\begin{definition}\label{def_task}
	The robot has accomplished  task $\eta\in \mathbb{N}^+$ at time $t\in \mathbb{N}^+$, if there is a time $t_{\eta}$ with $t\ge t_{\eta}\geq \mathbb{T}_{\eta}$, such that  
	$x(\bar t)\in \mathbb{O}(c_{\eta},R_{\eta})$, for  $ \forall \bar t\in[t_{\eta}-\mathbb{T}_{\eta}, t_{\eta}]$
	where 	
	$\mathbb{O}(c_{\eta},R_{\eta})=\{x\in\mathbb{R}^n|\norm{D_{\eta}x-c_{\eta}}\leq R_{\eta}\}$ is the $\eta$-th targeted set,
	  $\mathbb{T}_{\eta}\in \mathbb{N}^+$   the   dwell time,
	$c_{\eta}\in\mathbb{R}^{d_{\eta}}$ and $R_{\eta}\in\mathbb{R}^+$  the center and radius of the  targeted set,   $D_{\eta}\in\mathbb{R}^{d_{\eta}\times n}$    the constraint matrix, and $d_{\eta}$  a positive integer less than $n$.
\end{definition}


\begin{remark}
   Because of process uncertainties, it makes sense to define a task with a targeted set but not a point. If   dwell time $\mathbb{T}_{\eta}=1$, the problem   reduces to the controllability problem under uncertainties.   For different task $\eta$, parameters $\{\mathbb{T}_{\eta},D_{\eta}, c_{\eta},R_{\eta}\}$  may be totally differently.    The defined task can reduce to different tasks in different problems. For example, if $D_{\eta}=I$ and $c_{\eta}=0$, the task reduces to a stabilization task. For the second-order   system with  $A=\left(\begin{smallmatrix}
	1&T\\
	0&1
	\end{smallmatrix}\right)$ and $T>0$, if $D_{\eta}=(1,0)$, the task becomes how to control the robot's position into a certain area.
	A targeted set in Definition~\ref{def_task}   essentially refers to a multi-dimensional ball. The proposed methods in this paper can still be used for other types of targeted sets by partitioning  and approximating them via  a number of such balls.
\end{remark}

Suppose there are $d\in\mathbb{N}^+$ task(s)  with task order $\eta=1,2,\dots, d$. 
 For convenience, task $\eta=0$ stands for the initialization.
If task $\eta=0,\dots,d-1$ is accomplished,   we need a reference trajectory to 
navigate the robot from targeted set $\mathbb{O}(c_{\eta},R_{\eta})$ to  targeted set $\mathbb{O}(c_{\eta+1},R_{\eta+1})$. For  $\eta=0,1,\dots,d-1$, suppose there is a reference trajectory, consisting of  reference states $\{r_{\eta,\eta+1}(l)\}_{l=1}^{T_{\eta,\eta+1}}$  and reference  inputs
$\{u_{\eta,\eta+1}(l)\}_{l=1}^{T_{\eta,\eta+1}}$, subject to 
\begin{align}\label{eq_reference}
\begin{split}
D_{\eta+1}r_{\eta,\eta+1}(1)&=c_{\eta}, \quad	D_{\eta+1}r_{\eta,\eta+1}(T_{\eta,\eta+1})=c_{\eta+1},\\
r_{\eta,\eta+1}(l+1)&=Ar_{\eta,\eta+1}(l)+Bu_{\eta,\eta+1}(l),
\end{split}
\end{align}
where $T_{\eta,\eta+1}\in\mathbb{N}^+$	is the length of the $(\eta+1)$-th reference trajectory, and $r_{0,1}(1)$ is the initial reference state to be determined in Assumption~\ref{ass_ini}.

\begin{remark}
	 The  reference trajectories can be generated with   interpolation methods (e.g., cubic splin interpolation as used in the simulation) or trajectory optimization, e.g., signal temporal logic in formal methods \cite{belta2019formal} and  direct collocation methods \cite{kelly2017introduction}.   Due to system uncertainties, it is not suggestible to replace   control input $u(t)$  by   the  reference inputs (as shown in the simulation). 
\end{remark}


\subsection{Model of sensor networks }
There are $N\geq 2$ (smart and powered) sensors which can be activated  to measure and estimate the robot's state.
Due to physical limitation or resource constraints, only a subset of sensors are active  at each time for sensing and estimation,  and the rest of sensors remain inactive.
The measurement equation of sensor $i$ at time $t$ is:
\begin{align}
y_i(t)=\delta_i(t)\left(C_ix(t)+v_{i}(t)\right), i=1,\dots,N,
\end{align}
where $y_i(t)\in\mathbb{R}^{M_i}$ is the measurement vector, $v_i(t)\in\mathbb{R}^{M_i}$  the  measurement uncertainty, $C_i\in\mathbb{R}^{M_i\times n}$  the measurement matrix, and $\delta_i(t)\in\{0,1\}$ is an indicator of sensor's state, where $\delta_i(t)=1$ if sensor $i$ is active at time $t\in\mathbb{N}^+$, otherwise $\delta_i(t)=0$. 
Assume the robot can determine  the values of $\{\delta_i(t)\}$ via communicating with sensors.  
Suppose   $v_{i}(t)$ is bounded, i.e., there is a scalar $q_v\geq 0$, such that 
$\norm{v_{i}(t)}\leq q_v$. Such measurement uncertainty can be used to model sensor bias, unknown  disturbance, and bounded noise  \cite{d2013bounded}. Assume that each sensor is able to store a set of parameters and the reference trajectory data $\{r_{\eta,\eta+1}(l)\}_{l=1}^{T_{\eta,\eta+1}}$, 
$\{u_{\eta,\eta+1}(l)\}_{l=1}^{T_{\eta,\eta+1}}$ with $\eta=0,1,\dots, d-1$ in advance.



Since an individual sensor has limited observability (i.e.,  $(A,C_i)$ is unobservable or undetectable), the sensors are expected to estimate the robot state collaboratively. Suppose the $N$ sensors   form $Q_s$ DSNs.  We model the  communication of the  $k$-th network with $2\leq N(k)\leq N$ nodes   through a fixed undirected graph $\mathcal{G}(k)=(\mathcal{V}(k),\mathcal{E}(k),\mathcal{A}(k))$, where  $\mathcal{V}(k)\subset \{1,2,\dots,N\}$ denotes  the set of   nodes, $\mathcal{E}_k\subseteq \mathcal{V}(k)\times \mathcal{V}(k)$   the  set of  edges, and $\mathcal{ A}(k)$ the 0--1 adjacency matrix.  
If the $(i,j)$-th element of $\mathcal{ A}(k)$ is 0,   there is an edge $(i,j)\in \mathcal{E}(k)$, through which node $i$ can exchange information with node $j$. In the case,  node $j$  is called a  neighbor of node $i$, and vice versa. 
For $\mathcal{G}(k)$, let the neighbor set of node $i$ be $\mathcal{N}_{i}(k):=\{j\in\mathcal{V}(k)|(i,j)\in \mathcal{E}(k)\}$. Suppose that  each sensor has the knowledge of $\mathcal{N}_{i}(k)$ for $k=1,2,\dots,Q_s$.
 $\mathcal{D}(k)$ is the degree matrix, which is a diagonal matrix consisting of the numbers of neighbors. Denote $\mathcal{L}(k)=\mathcal{D}(k)-\mathcal A(k)$ the Laplacian matrix. 
The graph $\mathcal{G}(k)$ is  connected if for any pair of nodes $(i_{1},i_{l})$, there exists a  path from $i_{1}$ to $i_{l}$ consisting of edges $(i_{1},i_{2}),(i_{2},i_{3}),\ldots,(i_{l-1},i_{l})$. 
 It is   known that graph $\mathcal{G}(k)$ is  connected if and only if $\lambda_{2}(\mathcal{L}(k))>0$.
The following assumptions are needed in this paper.
\begin{assumption}\label{ass_detectability}
To accomplish $d$ tasks in order,  the robot is able to activate a sequence of connected graphs  $\{\mathcal{G}_{\eta}\}_{\eta=1}^{d}$ with $\mathcal{G}_{\eta}=\{\mathcal{V}_{\eta},\mathcal{E}_{\eta},\mathcal{A}_{\eta}\}\in \{\mathcal{G}(k)\}_{k=1}^{Q_s}$, such that  the system is collectively  detectable over $\mathcal{G}_{\eta}$ for each task $\eta$.
\end{assumption}
	The collective detectability in Assumption~\ref{ass_detectability} means that for each $\eta$, there is a matrix $G_{\eta}$ such that  $A-G_{\eta}\tilde C_{\eta}$ is Schur stable, where $\tilde C_{\eta}$ is  obtained by stacking all the measurement matrices of sensors in the set $\mathcal{V}_{\eta}.$  	The collective detectability is satisfied if there is one sensor $i$ such that $(A,C_i)$ is detectable, but not vice versa. 
	The detectability condition reduces to the mildest 
	one in distributed estimation when   network $\mathcal{G}_{\eta}$ includes  all sensors. Even if the system is not collectively detectable for some task $\eta$, one can partition this task into several tasks with length-reduced reference trajectories, then Assumption~\ref{ass_detectability} is satisfied as long as in each new task the collective detectability is ensured.
	

\begin{assumption}\label{ass_switching}
	The  robot has the full knowledge of the system and  sensor networks $\{\mathcal{G}_{\eta}\}_{\eta=1}^{d}$ satisfying Assumption~\ref{ass_detectability}.
\end{assumption}
	Assumption~\ref{ass_switching} enables the robot to have sufficient information such that it can make decisions of when to switch to the next task.

\begin{assumption}\label{ass_ini}
	At the initial time,  it holds that 
$\norm{x(1)-\hat x_i(1)}\leq q_x,  
\norm{x(1)-r_{0,1}(1)}\leq q_r,$
	where $q_x$ and $q_r$ are non-negative scalars,   $\hat x_i(1)$ is the estimate of $x(1)$ by sensor $i$, $i=1,2\dots,N$, and $r_{0,1}(1)$ is  the initial reference state from \eqref{eq_reference}.
\end{assumption}

\subsection{Problem of interest}\label{subsec_prob}
We focus on solving the following subproblems. 

\begin{enumerate}
	\item How to design an integrated estimation and control architecture  such that the robot is able to successively accomplish $d$ tasks  by switching the active DSN? (Section~\ref{sec:architecture})
	
	\item How to evaluate the estimation error and trajectory tracking deviation in each task? (Theorem~\ref{thm_error_bound})
	
	\item What conditions can ensure that each task is successfully accomplished?
How to offline evaluate the running  time for each task? (Theorem~\ref{thm_control})
	
\end{enumerate}

\section{Task-Switching Architecture}\label{sec:architecture}
 In this section, we design an integrated task-switching  architecture, consisting of an event-triggered feedback   controller for the robot and a distributed estimator for each active sensor.

\subsection{Integrated estimation and control architecture}
Let $\mathcal{G}_{\eta}=(\mathcal{V}_{\eta},\mathcal{E}_{\eta},\mathcal{A}_{\eta})$ satisfying Assumption~\ref{ass_detectability} be the active sensor network for task $\eta\geq 1$. Then  the set of active sensors for task $\eta$ or $\eta+1$ is denoted as
\begin{align}\label{set_V}
\bar{\mathcal{V}}(\eta)=\mathcal{V}_{\eta-1}\bigcup\mathcal{V}_{\eta}.
\end{align}
Suppose   $\Delta_{\eta-1}(t)\geq 0$ is the condition   such that  the task is switched from $\eta-1$ to $\eta$ at time $t$. Then at this time instant, the robot broadcasts to each sensor in the set $\bar{\mathcal{V}}(\eta)$ with the following message  
\begin{align}\label{message_M}
\mathcal{M}_{\eta}=\{\eta,\hat x_{s_{\eta-1}}(t),s_{\eta}\},
\end{align}
where $\eta$ is the next  task index, and $\hat x_{s_{\eta-1}}(t)$ is the latest estimate of the robot received from sensor $s_{\eta-1}$ in  task $\eta-1$, and $s_{\eta}$ is the label of the   sensor transmitting estimates to the robot in task $\eta.$

The whole architecture is depicted in Fig. \ref{fig:diagram}. Communication in this architecture exists  in  three aspects:	(i).  The robot communicates   message $\bar{\mathcal{V}}(\eta)$ to a subset of sensors ($\bar{\mathcal{V}}(\eta)$) only at the moment when the task is switched.  
(ii).  The robot   receives its state estimates  from  one sensor. The sensor label $s_{\eta}$ could be different in different task $\eta$.
For example,  a  mobile robot can choose a near sensor to obtain its   state estimate.   (iii).  The active sensors can communicate with each other in a distributed manner over the activated sensor network.
Since the existing centralized architectures often require  that  the exact control signal $u(t)$    is shared with sensors all the time and that the robot communicates with all sensors for   obtaining their measurements, this architecture is more suitable when the system is under constrained resources or communication. 
	
\subsection{Task-switching feedback  control}\label{subsec:control}

\begin{figure}[t]
	\centering
\includegraphics[scale=0.28]{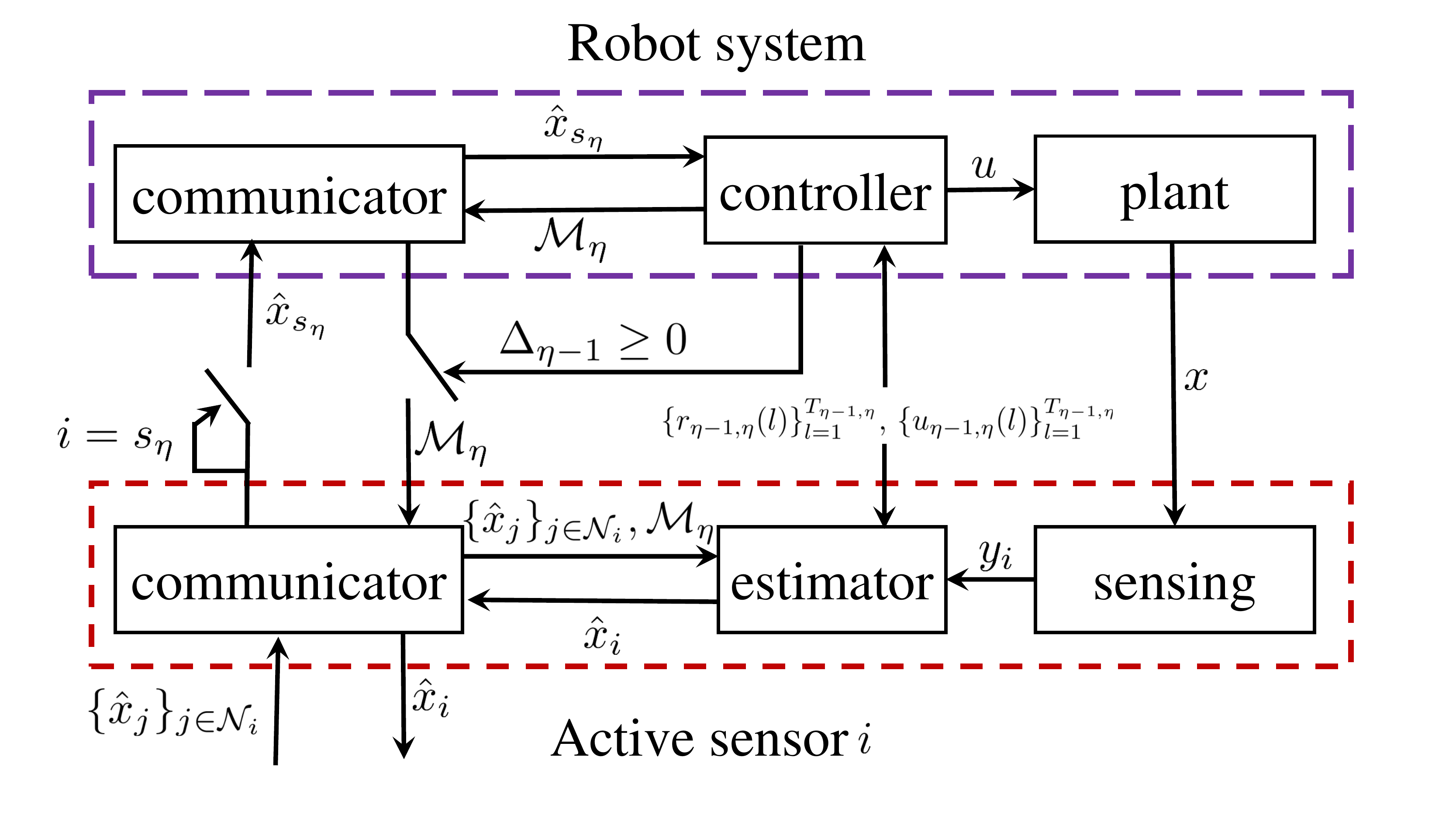}
	\caption{An integrated estimation and control  architecture with event-triggered task switching. When  $\Delta_{\eta-1}\geq 0$, the robot broadcasts $\mathcal{M}_{\eta}$ to  sensor $i\in \bar{\mathcal{V}}(\eta)$. If  sensor $i$ knows  from $\mathcal{M}_{\eta}$ that it should be active in this task, to update its estimate $\hat x_i$, it will use  measurement $y_i$, $\mathcal{M}_{\eta}$, the reference trajectory data $\{r_{\eta-1,\eta}(l)\}_{l=1}^{T_{\eta-1,\eta}}$  and
		$\{u_{\eta-1,\eta}(l)\}_{l=1}^{T_{\eta-1,\eta}}$, and the estimates from neighbor active sensors. If  sensor $i$ is inactive in this task, it will shut down the estimator and sensing components until it is activated again. Sensor $s_{\eta}$   communicates its estimate $\hat x_{s_{\eta}}$ with the robot for the controller design.		
		}
	
	\label{fig:diagram}
\end{figure}

For  task $\eta\geq 1$, according to the architecture in Fig.~\ref{fig:diagram}, 
we design the following feedback control input $u(t)$   with a fixed control gain $K\in\mathbb{R}^{p\times n}$: 
\begin{align}\label{control}
\begin{split}
u(t)&=K\left(\hat x_{s_{\eta}}(t)-r(t)\right)+u_r(t),\\
r(t)&=r_{\eta-1,\eta}(t-t_{\eta-1}+1),\\
u_r(t)&=u_{\eta-1,\eta}(t-t_{\eta-1}+1),
\end{split}
\end{align}
where   
$t_{\eta-1}$ is the ending time for task $\eta-1$ (also the initial time for 
task $\eta$), i.e., the 
 time when $\Delta_{\eta-1}(t)\geq 0$.


\begin{remark}
	 Since $u(t)$ in \eqref{control} does not involve complex computations, it is  suitable  for the robot with limited computational resources.
	Compared with  $u_r(t)$ as the control input signal,   feedback   control signal $u(t)$    in \eqref{control} has advantages  against   system uncertainties.
\end{remark}

\begin{algorithm}[t]
	\small
	\caption{Event-triggered task-switching controller}
	\label{alg:B}
	\begin{algorithmic}
		\STATE{\textbf{Initial setting:} Control gain $K$,   targeted sets $\{\mathbb{O}(c_{\eta},R_{\eta})\}_{\eta=1}^{d}$, reference states $\{r_{\eta-1,\eta}(l)\}_{l=1}^{T_{\eta-1,\eta}}$ and inputs $\{u_{\eta-1,\eta}(l)\}_{l=1}^{T_{\eta-1,\eta}}$, $\eta=1,\dots,d$, desired dwell time in  targeted sets $\{\mathbb{T}_{\eta}\}$, cumulative  remaining time  set $\{\mathcal{T}_{\eta}\}$ initialized by empty sets, task accomplishment indicators $\mathcal{J}_{\eta}=0$ for each $\eta$,    initial task label $\eta=1$;  }
		\FOR{$t=1,2,\dots$}
		\STATE{ \textbf{// Communication with sensor $s_{\eta}$:}\\
			Robot obtains an estimate $\hat x_{s_{\eta}}(t)$ from sensor $s_{\eta}$}; \\\vskip 5pt
		\STATE{	{\textbf{// Event-triggered condition for task switching:}} 
			\IF {$f(t)\leq R_{\eta}$} 
			\STATE{ $\mathcal{T}_{\eta}=\mathcal{T}_{\eta}\bigcup \{t\}$}
			\ENDIF	
			\IF {$|\mathcal{T}_{\eta}|= \mathbb{T}_{\eta}$ }  
			\STATE{ $\mathcal{J}_{\eta}=1\qquad$ // task $\eta$ is accomplished}  
			\STATE{$t_{\eta}=t$ \qquad// task switching time instant}
			\ENDIF					
		}		\vskip 5pt
		\STATE{ \textbf{//  Communication with some sensors when   task $\eta$ is accomplished}}
		\IF {$\mathcal{J}_{\eta}=1$}  
		\STATE{  
			Broadcast   message $\mathcal{M}_{\eta+1}$ in \eqref{message_M} to sensors in the set $\bar{\mathcal{V}}(\eta+1)$ \eqref{set_V};\\
			$\eta=\eta+1$;
		}  
		\ENDIF					\vskip 5pt			 	
		\STATE{	{\textbf{// Feedback control input:}} \\ 
			Design $u(t)$ as in \eqref{control}};
		\ENDFOR
	\end{algorithmic}
\end{algorithm}


Condition $\Delta_{\eta-1}(t)\geq 0$ is supposed to ensure that task $\eta-1$ is accomplished for sure.
 The following lemma provides a sufficient condition such that the robot is in a targeted set at a single time.
\begin{lemma}\label{lem_event}
	Suppose there are two sequences $\{h_e(t)\}$ and $\{h_c(t)\}$,  such that 
	$$\norm{D_{\eta}\left(x(t)-\hat x_{s_{\eta}}(t)\right)}\leq h_e(t),\quad \norm{D_{\eta}\left(x(t)-r(t)\right)}\leq h_c(t).$$
	Then robot state $x(t)$ is in the targeted set $\mathbb{O}(c_{\eta},R_{\eta})$, if 
	\begin{align}\label{evet_condi}
	f(t):=\min\big\{h_e(t)+g_e(t), h_c(t)+g_c(t)\big\}\leq R_{\eta},
	\end{align}
	where $	g_e(t)=\norm{D_{\eta}\hat x_{s_{\eta}}(t)-c_{\eta}}$ and $g_c(t)=\norm{ D_{\eta}r(t)-c_{\eta}}.$
\end{lemma}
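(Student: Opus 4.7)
The plan is to verify directly that $\norm{D_{\eta}x(t)-c_{\eta}}\leq R_{\eta}$, which by definition places $x(t)$ in the targeted set $\mathbb{O}(c_{\eta},R_{\eta})$. The strategy rests entirely on two applications of the triangle inequality, each routed through a different intermediate point (the estimate $\hat x_{s_{\eta}}(t)$ in one case, and the reference $r(t)$ in the other), followed by taking the tighter of the two resulting bounds.

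First I would insert $\pm D_{\eta}\hat x_{s_{\eta}}(t)$ into the quantity $D_{\eta}x(t)-c_{\eta}$ and apply the triangle inequality to obtain
\begin{equation*}
\norm{D_{\eta}x(t)-c_{\eta}}\leq \norm{D_{\eta}\bigl(x(t)-\hat x_{s_{\eta}}(t)\bigr)}+\norm{D_{\eta}\hat x_{s_{\eta}}(t)-c_{\eta}}.
\end{equation*}
The first term is bounded by $h_e(t)$ by the estimation-deviation hypothesis, and the second is exactly $g_e(t)$ by definition, giving $\norm{D_{\eta}x(t)-c_{\eta}}\leq h_e(t)+g_e(t)$.

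Next I would repeat the same manipulation with the reference state $r(t)$ as the intermediate point, i.e., split $D_{\eta}x(t)-c_{\eta}=D_{\eta}(x(t)-r(t))+(D_{\eta}r(t)-c_{\eta})$, and use the tracking-deviation hypothesis together with the definition of $g_c(t)$ to obtain $\norm{D_{\eta}x(t)-c_{\eta}}\leq h_c(t)+g_c(t)$. Combining the two estimates yields $\norm{D_{\eta}x(t)-c_{\eta}}\leq \min\{h_e(t)+g_e(t),\,h_c(t)+g_c(t)\}=f(t)$, and the hypothesis $f(t)\leq R_{\eta}$ then closes the argument.

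There is essentially no obstacle here: the lemma is a packaging result that reformulates set membership in terms of two computable (or bounded) error quantities, so the only step worth noting is the observation that taking the minimum is legitimate because the left-hand side $\norm{D_{\eta}x(t)-c_{\eta}}$ is a single deterministic quantity simultaneously dominated by both upper bounds. The importance of the lemma is not in its proof but in what it enables downstream: the quantity $f(t)$ is constructed entirely from quantities available to the robot (the estimate $\hat x_{s_{\eta}}(t)$, the known reference $r(t)$, and the bounds $h_e(t),h_c(t)$ that Theorem~\ref{thm_error_bound} will supply), which is precisely what makes the event-triggered test $f(t)\leq R_{\eta}$ implementable in Algorithm~\ref{alg:B}.
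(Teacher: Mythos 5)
Your proposal is correct and follows essentially the same route as the paper: insert $\pm D_{\eta}\hat x_{s_{\eta}}(t)$ and $\pm D_{\eta}r(t)$ respectively, apply the triangle inequality to each decomposition, and take the minimum of the two resulting upper bounds so that $\norm{D_{\eta}x(t)-c_{\eta}}\leq f(t)\leq R_{\eta}$. Your added remark that the minimum is legitimate because both bounds dominate the same deterministic quantity is a correct (if implicit in the paper) observation.
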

\begin{proof}
It holds that $D_{\eta}x(t)-c_{\eta}=D_{\eta}x(t)-D_{\eta}\hat x(t)+D_{\eta}\hat x(t)-c_{\eta},
D_{\eta}x(t)-c_{\eta}=D_{\eta}x(t)-D_{\eta}r(t)+D_{\eta}r(t)-c_{\eta}.$
By taking 2-norm of both sides of the above equalities and applying the norm  triangle inequality, we have 
$\norm{D_{\eta}x(t)-c_{\eta}}\leq f(t)$. Then if \eqref{evet_condi} holds, $\norm{D_{\eta}x(t)-c_{\eta}}\leq   R_{\eta}$, which means 
 $x(t)\in \mathbb{O}(c_{\eta},R_{\eta}).$  
	\end{proof}

Since $\{g_e(t)\}$ and $\{g_c(t)\}$ can be directly computed by the robot,   condition \eqref{evet_condi} can be used in the design of $\Delta_{\eta-1}(t)\geq 0$ provided with  two sequences $\{h_e(t)\}$ and $\{h_c(t)\}$ satisfying the requirement in Lemma~\ref{lem_event}. In Section~\ref{sec:theory}, we  provide a design of $\{h_e(t)\}$ and $\{h_c(t)\}$.  
Based on the proposed architecture in Fig.~\ref{fig:diagram}, control input \eqref{control}, and Lemma~\ref{lem_event},  
we propose an event-triggered task-switching controller  (Algorithm~\ref{alg:B}) such that the   robot is able to accomplish the given tasks in sequence.

\subsection{Task-switching distributed estimator}
In order to provide the robot with  state estimates over the active DSN in each task $\eta$, we  propose a two-time-scale    distributed estimator (Algorithm~\ref{alg:A})
for each  sensor $i\in \bar{\mathcal{V}}(\eta)$. 

\begin{algorithm}[t]
	\small
	\caption{Task-switching distributed estimator}
	\label{alg:A}
	\begin{algorithmic}
		\STATE{\textbf{Initial setting:} Control gain $K$,     reference states $\{r_{\eta-1,\eta}(l)\}_{l=1}^{T_{\eta-1,\eta}}$ and inputs $\{u_{\eta-1,\eta}(l)\}_{l=1}^{T_{\eta-1,\eta}}$, $\eta=1,\dots,d$,  parameter set $\{G_{i,\eta},L_{\eta},\alpha_{\eta}\}_{\eta=1}^{d}$, and initial estimate $\hat x_i(1)$;  }
		\FOR{$t=1,2,\dots$}	       
		\IF{Sensor $i$ receives $\mathcal{M}_{\eta}$ and $i\notin\mathcal{V}_{\eta} $}
		\STATE{Sensor $i$ becomes inactive in sensing and estimation;}
		\ELSIF{Sensor $i$ receives $\mathcal{M}_{\eta}$ and $i\in\mathcal{V}_{\eta} $}
		\STATE{Based on $\mathcal{M}_{\eta}$ in \eqref{message_M}, update $\eta,s_{\eta}$, and let $\hat x_{i}(t)=\hat x_{s_{\eta-1}}(t)$.}		
		\ELSE
		\STATE{	\textbf{// Predicted Controller:}  \\
			$\hat u_i(t)=K\left(\hat x_{i}(t)-r(t)\right)+u_r(t)$
			where $r(t)$ and $u_r(t)$ are given in \eqref{control}.		}\\\vskip 5pt
		\STATE{	\textbf{// Estimator Update:}}\\
		$		\tilde x_{i}(t)=A\hat x_{i}(t)+B\hat u_i(t) + G_{i,\eta}(y_{i}(t)-C_i\hat x_{i}(t)),$
		where $G_{i,\eta}$ is the corresponding gain of sensor $i$ when $\mathcal{G}_{\eta}$ is active.
		\vskip 5pt
		\STATE{		\textbf{// Neighbor sensors communicate for $L_{\eta}$ times:}} \\
		\FOR{  $l=1,\dots,L_{\eta}$}
		\STATE{	 Sensor $i$ receives $\bar x_{j,l-1}(t)$ from neighbor sensor $j$, and runs
			$\bar x_{i,l}(t)=\bar{x}_{i,l-1}(t) -\alpha_{\eta}\sum_{j\in\mathcal{N}_{i,\eta}}(\bar{x}_{i,l-1}(t)-\bar{x}_{j,l-1}(t)),$
			where $\bar x_{i,0}(t)=\tilde x_{i}(t)$, $\hat x_{i}(t+1)=\bar x_{i,L_{\eta}}(t)$, and $\mathcal{N}_{i,\eta}$ is the neighbor set of sensor $i$ in graph $\mathcal{G}_{\eta};$
		}
		\ENDFOR
		\STATE{ \textbf{//  Communication with the robot}}
		\IF{ $i=s_{\eta}$}
		\STATE{	  Sensor $i$ transmits its estimate $\hat x_{s_{\eta}}(t+1)$ to the robot.}
		\ENDIF			
		\ENDIF
		\ENDFOR
	\end{algorithmic}
\end{algorithm}


	\begin{remark}
Based on the task information (i.e., $\eta$  in $\mathcal{M}_{\eta}$) from the robot, each active sensor is able to choose the corresponding reference input  $u_r(t)$ and  parameters $\{G_{i,\eta},L_{\eta},\alpha_{\eta}\}.$   Integer $L_{\eta}$ represents the communication times of each sensor with its  neighboring active sensors between two measurement updates. 
In this paper, $L_{\eta}$ is not necessarily very	large. For each task, an explicit requirement of $L_{\eta}$  is given in Theorem~\ref{thm_error_bound}. Such a two-time-scale distributed estimation can be supported with the advancement of communication technologies like 5G. In the literature, a number of two-time-scale distributed algorithms have been studied (e.g., \cite{marelli2021distributed}).
	\end{remark} 
	\begin{remark}
		Since sensor $s_{\eta}$ shares its estimate with the robot, its estimation error    is due to system uncertainties. 
		 For the rest of sensors,   their estimation errors are also affected by   inexact control inputs.
		The   errors induced from the inexact control inputs will be reduced if a large $L_{\eta}$ is designed.   
	\end{remark}

\begin{lemma}\label{lem_parameter}
	Under Assumptions \ref{ass_stabilization}--\ref{ass_switching},  it is feasible to choose 
	 parameters $\{G_{i,\eta},L_{\eta},\alpha_{\eta}\}_{\eta=1}^{d}$ for Algorithms \ref{alg:B}--\ref{alg:A} such that:
	\begin{itemize}
		\item Matrix $A-G_{\eta}\tilde C_{\eta}$ is Schur stable, where $\tilde C_{\eta}$ and $G_{\eta}$ are  obtained by stacking  measurement matrices $\{C_{i}\}_{i\in\mathcal{V}_{\eta}}$ and weighted gain matrices $\{G_{i,\eta}/N_\eta\}_{i\in\mathcal{V}_{\eta}}$  in row and column, respectively.
		\item Control gain matrix $K$ satisfies Assumption \ref{ass_stabilization}.
		\item Communication parameter $\alpha_{\eta}\in(0,2/\lambda_{\max}(\mathcal{L}_{\eta}))$.
	\end{itemize}
\end{lemma}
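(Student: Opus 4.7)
My plan is to verify the three bullets independently, since no bullet couples the choices across $K$, the sensor gains, and $\alpha_\eta$, and no bullet constrains $L_\eta$.

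Bullet (ii) is immediate from Assumption~\ref{ass_stabilization}: stabilizability guarantees the existence of a $K$ with $A+BK$ Schur stable, so fix any such $K$. For bullet (iii), each $\mathcal{G}_\eta$ in Assumption~\ref{ass_detectability} is connected with $N_\eta \geq 2$, so the Laplacian $\mathcal{L}_\eta$ is symmetric positive semi-definite with $\lambda_{\max}(\mathcal{L}_\eta) > 0$; hence $(0,\, 2/\lambda_{\max}(\mathcal{L}_\eta))$ is a non-empty interval from which to draw $\alpha_\eta$. Since none of the three bullets constrains $L_\eta$, any $L_\eta \in \mathbb{N}^+$ is feasible at this stage; a quantitative lower bound appears only later in Theorem~\ref{thm_error_bound} when error bounds are derived, and need not enter here.

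The only substantive step is bullet (i): producing per-sensor gains $\{G_{i,\eta}\}$ so that, when rescaled by $1/N_\eta$ and concatenated into $G_\eta$, the resulting matrix stabilizes $A$ through $\tilde C_\eta$. My approach is to invert the stacking operation. By the collective detectability stated directly after Assumption~\ref{ass_detectability}, for each $\eta$ there exists a matrix $\bar G_\eta$ of compatible dimension with $A - \bar G_\eta \tilde C_\eta$ Schur stable. I would partition $\bar G_\eta$ column-wise to match the row partition of $\tilde C_\eta = [C_{i_1}^T, \ldots, C_{i_{N_\eta}}^T]^T$, writing $\bar G_\eta = [\bar G_{\eta,1}, \ldots, \bar G_{\eta, N_\eta}]$, and then set $G_{i_k, \eta} := N_\eta \bar G_{\eta, k}$ for each $k \in [1, N_\eta]$. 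Under this definition, the column concatenation of the blocks $G_{i_k,\eta}/N_\eta$ reproduces $\bar G_\eta$ exactly, so $A - G_\eta \tilde C_\eta = A - \bar G_\eta \tilde C_\eta$ inherits Schur stability.

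The main (and quite minor) obstacle is precisely the bookkeeping of the $N_\eta$ normalization used in the lemma statement: without multiplying $\bar G_{\eta,k}$ by $N_\eta$ when defining $G_{i_k,\eta}$, the stacked matrix becomes $\bar G_\eta / N_\eta$, which is in general not stabilizing. Once this scaling is handled, the lemma is a direct consequence of Assumptions~\ref{ass_stabilization}--\ref{ass_switching} together with elementary spectral properties of the graph Laplacian.
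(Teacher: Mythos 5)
Your proposal is correct and coincides with the reasoning the paper relies on (the paper states this lemma without an explicit proof, treating it as immediate): bullet (ii) is Assumption~\ref{ass_stabilization} verbatim, bullet (iii) follows from connectivity of $\mathcal{G}_{\eta}$ giving $\lambda_{\max}(\mathcal{L}_{\eta})>0$, and bullet (i) follows from the collective detectability in Assumption~\ref{ass_detectability}, which directly supplies a stabilizing $\bar G_{\eta}$ that you correctly un-stack into per-sensor gains $G_{i,\eta}=N_{\eta}\bar G_{\eta,i}$ so that the $1/N_{\eta}$-weighted column concatenation recovers $\bar G_{\eta}$. Your observation that the three bullets place no constraint on $L_{\eta}$ (the quantitative lower bound appearing only in Theorem~\ref{thm_error_bound}) is also accurate.
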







\section{Performance Analysis}\label{sec:theory}
In this section, we aim to find upper bounds of the estimation error (i.e., $\norm{x_i(t)-\hat x_i(t)}$) and the trajectory tracking deviation (i.e., $\norm{x_i(t)-r(t)}$), and then provide a design of $h_e(t)$ and $h_c(t)$ for    event-triggered task-switching condition \eqref{evet_condi}.
 Moreover, with this design, we find the condition ensuring the accomplishment of each task. Furthermore,   an upper bound of running time for each task is provided.



\subsection{Upper bounds of estimation error and tracking deviation}\label{sub_bounds}
 For further analysis,  
we compute two sequences $\{\bar h_e(t)\}$ and $\{\bar h_c(t)\}$ by Algorithm~\ref{alg:bounds}.
To compute $\{\bar h_e(t)\}$ in Algorithm~\ref{alg:bounds}, denote 
\begin{align}\label{eq_he}
a_{\eta}(t)&=\frac{(1+4\gamma_{\eta}^2)N_{\eta}\lambda_{\max}(P_{\eta})}{\lambda_{\min}(P_{\eta})}  \varpi_{\eta}^{t-t_{\eta-1}}\nonumber\\
b_{\eta}(t)&=\frac{q_{\eta}\sum_{l=0}^{t-t_{\eta-1}-1}\varpi_{\eta}^l}{\lambda_{\min}(P_{\eta})}, \nonumber\\
\varpi_{\eta}&=1-1/(3\lambda_{\max}(P_{\eta})) \nonumber\\
q_{\eta}&=(1+1/\rho_{\eta})\lambda_{\max}(P_{\eta})N_{\eta}\bar q_{\eta}\\
\bar q_{\eta}&=\big(q_w+q_v\sum_{i\in\mathcal{V}_{\eta}}\norm{G_{i,k}}/N_{\eta}\big)^2 +q_v^2\gamma_{\eta}^2\lambda_{c,\eta}^{2L_{\eta}}\max_{i\in\mathcal{V}_{\eta}}\norm{G_{i,k}}^2\nonumber\\
\lambda_{c,\eta}&=\max\{|1-\alpha_{\eta}\lambda_{2}(\mathcal{L}_{\eta})|,|1-\alpha_{\eta}\lambda_{\max}(\mathcal{L}_{\eta})|\}, \nonumber
\end{align} 
where $\gamma_{\eta}$ and $\rho_{\eta}$ are any positive scalars, and 	
$P_{\eta}$   is  the unique solution  to the following Riccati equation  
\begin{align}\label{eq_riccati}
(A-G_{\eta}\tilde C_{\eta})^{\sf T}P_{\eta}(A-G_{\eta}\tilde C_{\eta})+I_n=P_{\eta}.
\end{align}	

To compute $\{\bar h_c(t)\}$ in Algorithm~\ref{alg:bounds}, denote 
	\begin{align}\label{eq_control_parameters}
	\hat r(t_{\eta-1})&=\norm{r_{\eta-1,\eta}(1)-r(t_{\eta-1})}\nonumber\\
\hat a_{\eta}(t)&=\frac{\lambda_{\max}(P_*)}{\lambda_{\min}(P_*)}\lambda^{t-t_{\eta-1}}\nonumber\\
\lambda&=1-1/(2\lambda_{\max}(P_*))\\
\hat b_{\eta}(t)&=\frac{\beta}{\lambda_{\min}(P_*)}\sum_{l=0}^{t-t_{\eta-1}-1}\lambda^{l}(\norm{BK}\bar h_e(t-1-l)+q_w)\nonumber\\
\beta&=\norm{P_*}+2\norm{P_*(A+BK)}^2,\nonumber
	\end{align}
 and $P_*$ is  the unique solution  to  the following Riccati equation 
 \begin{align}\label{eq_riccati2}
 (A+BK)^{\sf T}P_{*}(A+BK)+I_n=P_{*}.
 \end{align}
 
 \begin{remark}\label{rem_random}
 	  Since the robot has the knowledge of $\{t_{\eta-1}\}_{\eta=2}^{d}$, under Assumption~\ref{ass_switching}, it is able to compute   sequences $\{\bar h_e(t)\}$ and $\{\bar h_c(t)\}$ with Algorithm~\ref{alg:bounds}. Note that $\{t_{\eta-1}\}_{\eta=2}^{d}$ are random according to the event-triggered task switching, thus  $\{\bar h_e(t)\}$ and $\{\bar h_c(t)\}$ are also random.
 \end{remark}
 
 \begin{algorithm}[t]
 	\small
 	\caption{Computation of $\{\bar h_e(t)\}$ and $\{\bar h_c(t)\}$}
 	\label{alg:bounds}
 	\begin{algorithmic}
 		\STATE{\textbf{Initial setting:} 
 			$\{t_{\eta-1}\}_{\eta=1}^d$ with $t_0=1$ from Algorithm~\ref{alg:B}, and $a_{\eta}(t), b_{\eta}(t)$ from \eqref{eq_he}, and $\hat a_{\eta}(t),	\hat r(t_{\eta-1})	\hat b_{\eta}(t)$ from \eqref{eq_control_parameters}
 		}
 		\FOR{$t=1,2,\dots$}	  
 		\IF{$t=1$}
 		\STATE{$\bar h_e(1)=q_x,\quad \bar h_c(1)=q_r,\quad \eta=1$} 
 		\ELSIF{$t\in[2,t_1]$}
 		\STATE{$\bar h_e(t)=\sqrt{ a_{\eta}(t-1)q_x^2+b_{\eta}(t-1)}$ \\ $\bar h_c(t)
 			=\sqrt{\hat a_{\eta}(t)\left(\bar h_c(t_{\eta-1})+\hat r (t_{\eta-1})\right)^2 +\hat b_{\eta}(t)}, $}
 		\ELSIF{$t\in[t_{\eta-1}+1,t_{\eta}]$}
 		\STATE{$\bar h_e(t)=\sqrt{a_{\eta}(t) \bar h_e^2(t_{\eta-1})+b_{\eta}(t)}$\\ $\bar h_c(t)
 			=\sqrt{\hat a_{\eta}(t)\left(\bar h_c(t_{\eta-1})+\hat r (t_{\eta-1})\right)^2 +\hat b_{\eta}(t)}, $}
 		\ENDIF	  
 		\IF{$t=t_{\eta}$}
 		\STATE{$\eta=\eta+1$}
 		\ENDIF	  
 		\STATE{$t=t+1$}   
 		\ENDFOR 
 	\end{algorithmic}
 \end{algorithm}
 
	Next we provide conditions such that the two sequences $\{\bar h_e(t)\}$ and $\{\bar h_c(t)\}$   are   upper bounds of the state estimation error and the trajectory tracking deviation respectively. 
\begin{theorem}\label{thm_error_bound}
	Consider the  system and Algorithms~\ref{alg:B}--\ref{alg:bounds} satisfying Assumptions \ref{ass_stabilization}--\ref{ass_ini} with      parameters $\{G_{i,\eta}\}$, $K$, and $\{\alpha_{\eta}\}$   designed as in Lemma~\ref{lem_parameter}.  
	The estimation error of each active sensor and the trajectory tracking deviation are both upper bounded, i.e., for any $t\in [t_{\eta-1}+1,t_{\eta}]$, 
	\begin{align}\label{bound_error} 
	\norm{\hat x_i(t)-x(t)}\leq \bar h_e(t),  \quad 
	\norm{ x(t)-r(t)}\leq	\bar h_c(t), 
	\end{align}
if  for any task $1\leq j\leq \eta$    communication parameter 	$L_{j}$ is subject to 
	\begin{align}\label{L_bound}
	L_{j}\geq \max\{d_{j},e_{j}\}
	\end{align}
	where 
	\begin{align*}
	d_{j}&=\frac{\log \left(\gamma_{j}\hat q_{j}\sqrt{3(1+\rho_{j})(1+\tau_{j})}\right)}{\log \lambda_{c,{j}}^{-1}}\\
	e_{{j}}&=\frac{\log \left((\norm{A+BK}+\hat q_{j})\sqrt{3(1+\rho_{j})(1+1/\tau_{j})}\right)}{\log \lambda_{c,{j}}^{-1}}
	\end{align*}
	in which $\hat  q_{j}=\max_{i\in\mathcal{V}_{j}}\norm{G_{i,{j}}C_i},$ and  positive scalars	$\rho_{j}$, $\tau_{j}$, and  $\gamma_{j}$ are  subject to
	\begin{align}\label{eq_rho_gamma}
\begin{split}
	&(1+\rho_{j})(1+\tau_{j})\leq \frac{\lambda_{\max}(P_{j})-2/3}{\lambda_{\max}(P_{j})-1}\\
	&\gamma_{j}\geq \max\{\sqrt{2},f_{j}\sqrt{3(1+\rho_{j})(1+1/\tau_{j})\lambda_{\max}(P_{j})}\},
\end{split}
	\end{align}
	where $f_{j}=\sqrt{N_{j}}(\norm{BK}+\|\sum_{i\in\mathcal{V}_{j}}G_{i,{j}}C_i/N_{j}\|).$
\end{theorem}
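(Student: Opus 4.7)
The plan is to prove both bounds in \eqref{bound_error} by nested induction: first on the task index $\eta$, and, within each task, on the time step $t\in[t_{\eta-1}+1,\,t_\eta]$. The base $t=1$ is exactly Assumption~\ref{ass_ini}, which gives $\norm{\hat x_i(1)-x(1)}\le q_x=\bar h_e(1)$ and $\norm{x(1)-r(1)}\le q_r=\bar h_c(1)$. At a switching instant $t_{\eta-1}$, the inductive hypothesis from task $\eta-1$ furnishes $\bar h_e(t_{\eta-1})$ and $\bar h_c(t_{\eta-1})$; the jump term $\hat r(t_{\eta-1})$ appearing in $\bar h_c$ is precisely the reference discontinuity $\norm{r_{\eta-1,\eta}(1)-r(t_{\eta-1})}$ absorbed by a single triangle inequality at the boundary.

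For the estimation bound I would analyse the stacked error whose $i$-th block is $e_i(t)=\hat x_i(t)-x(t)$, $i\in\mathcal V_\eta$, using the Lyapunov function $V_\eta(t)=\sum_{i\in\mathcal V_\eta}e_i(t)^{\sf T}P_\eta e_i(t)$ built from the Riccati equation \eqref{eq_riccati}. The pre-consensus local update gives $\tilde e_i(t)=(A-G_{i,\eta}C_i)e_i(t)+BK(e_i(t)-e_{s_\eta}(t))+G_{i,\eta}v_i(t)-w(t)$, and the $L_\eta$ consensus rounds then act as $(W^{L_\eta}\otimes I_n)$ with $W=I-\alpha_\eta\mathcal L_\eta$. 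Decomposing the stacked pre-consensus error into its network average $\bar e$ and its disagreement component exposes the core mechanism: the average obeys $(A-G_\eta\tilde C_\eta)\bar e+\text{bounded drive}$ and inherits the Riccati contraction, while the disagreement is scaled by $\lambda_{c,\eta}^{L_\eta}$. Applying the quadratic splitting $(x+y)^{\sf T}P(x+y)\le(1+\sigma)x^{\sf T}Px+(1+1/\sigma)y^{\sf T}Py$ twice\,---once to peel the Riccati contraction $(A-G_\eta\tilde C_\eta)$ into rate $\varpi_\eta=1-1/(3\lambda_{\max}(P_\eta))$, and once through the free parameters $\rho_\eta$, $\tau_\eta$, $\gamma_\eta$ to isolate the control-coupling and consensus-residual terms\,---yields a one-step recursion $V_\eta(t+1)\le\varpi_\eta V_\eta(t)+q_\eta$. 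The lower bound \eqref{L_bound} on $L_\eta$ is exactly the threshold needed to make $\gamma_\eta\hat q_\eta\lambda_{c,\eta}^{L_\eta}$ and $(\norm{A+BK}+\hat q_\eta)\lambda_{c,\eta}^{L_\eta}$ small enough to keep the cross terms inside this contraction, while the feasibility conditions \eqref{eq_rho_gamma} guarantee that $\rho_\eta,\tau_\eta,\gamma_\eta$ with the required sizes exist simultaneously. Unrolling the recursion from $t_{\eta-1}$ and sandwiching $V_\eta$ between $\lambda_{\min}(P_\eta)\norm{e_i}^2$ and $N_\eta\lambda_{\max}(P_\eta)\norm{e_i}^2$ reproduces the closed form $\bar h_e(t)$ of Algorithm~\ref{alg:bounds}, with the multiplier $(1+4\gamma_\eta^2)$ tracking the inflation from the splitting step.

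With the estimation bound in hand, the tracking inequality follows from the closed-loop residual $x(t+1)-r(t+1)=(A+BK)(x(t)-r(t))+BK(\hat x_{s_\eta}(t)-x(t))+w(t)$; I would treat $BK(\hat x_{s_\eta}-x)$ and $w$ as exogenous drives of norm at most $\norm{BK}\bar h_e(t)$ and $q_w$ respectively. Applying the same splitting trick to the Lyapunov $V_\ast(z)=z^{\sf T}P_\ast z$ built from \eqref{eq_riccati2}, with $\sigma$ chosen so that $(1+\sigma)(1-1/\lambda_{\max}(P_\ast))\le\lambda=1-1/(2\lambda_{\max}(P_\ast))$, produces the one-step inequality $V_\ast(t+1)\le\lambda V_\ast(t)+\beta(\norm{BK}\bar h_e(t)+q_w)^2$. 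Unrolling, bounding $V_\ast$ via $\lambda_{\min}(P_\ast)$ and $\lambda_{\max}(P_\ast)$, and inserting the boundary jump $\hat r(t_{\eta-1})$ by one triangle inequality reproduces $\bar h_c(t)$ as defined.

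The main obstacle will be the estimation step, and in particular showing that the coupling induced by the heterogeneous local gains $G_{i,\eta}$ and the single shared control $u(t)$ (which appears as $\hat u_i(t)-u(t)=K(e_i(t)-e_{s_\eta}(t))$ in every sensor's innovation) can be absorbed into a genuine one-step contraction. This forces a delicate interlocking of $\rho_\eta,\tau_\eta,\gamma_\eta$ against both the Riccati contraction margin and the consensus rate $\lambda_{c,\eta}$, and it is exactly this interlocking that dictates the explicit thresholds \eqref{L_bound} and the admissibility conditions \eqref{eq_rho_gamma}. Once the estimation recursion is established, the tracking argument and the task-boundary bookkeeping are routine Lyapunov unrolling.
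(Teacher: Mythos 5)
Your proposal follows essentially the same route as the paper: the net/disagreement error decomposition with the $\gamma_\eta$-scaled disagreement block, a block Lyapunov function built from the Riccati solutions $P_\eta$ and $P_*$, the $(1+\tau)$/$(1+1/\tau)$ quadratic splitting to obtain the one-step contractions $\mathbb{V}(t)\leq\varpi_\eta\mathbb{V}(t-1)+q_\eta$ and the analogous tracking recursion, with $L_\eta$ chosen to suppress the consensus-residual blocks and a triangle inequality handling the reference jump at each switch. The only slip is that your Lyapunov function as literally written, $\sum_i e_i^{\sf T}P_\eta e_i$, should instead be the block-diagonally weighted quadratic form on the decomposed vector $(\mathbf{1}\otimes\mathbf{e}_{\net},\,\gamma_\eta\mathbf{e}_{\avg})$ with $P_\eta$ acting only on the average part (as your own references to the $\gamma_\eta$-scaling and the $(1+4\gamma_\eta^2)$ inflation factor already presuppose).
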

\begin{proof}
	See   Appendix \ref{pf_thm_bounds}.
\end{proof}

Conditions \eqref{L_bound} and \eqref{eq_rho_gamma} are satisfied as long as $L_{j}$ is large enough. Parameters $\rho_{j},\tau_j,\gamma_j$ are introduced for technical reasons. To obtain   parameters   $\{L_{j},\rho_{j},\tau_j,\gamma_j\}$ satisfying \eqref{L_bound} and \eqref{eq_rho_gamma}, one can first find sufficiently small $\rho_j$ and $\tau_j$ such that the first condition in  \eqref{eq_rho_gamma} holds. Then given the selected $\rho_j$ and $\tau_j$,   choose sufficiently large $\gamma_j$ such that the second condition in  \eqref{eq_rho_gamma} holds. Finally, given the chosen $\rho_j$, $\tau_j$, and $\gamma_j$,   find a sufficiently large $L_j$ such that \eqref{L_bound} holds. When there is no uncertainty in  system dynamics and measurements, the estimation and tracking performance of the architecture is illustrated in the following corollary, whose proof follows from Theorem~\ref{thm_error_bound}.
\begin{corollary}
	Under the same conditions as in Theorem~\ref{thm_error_bound}, if the system is uncertainty-free, i.e., $q_w=q_v=0,$ the following result holds
	\begin{align*}
	\lim\limits_{t\rightarrow\infty}	\norm{\hat x_i(t)-x(t)}=0,   \quad 
	\lim\limits_{t\rightarrow\infty}\norm{ x(t)-r(t)}=0,  
	\end{align*}
	provided that the task is not switched and the reference trajectory data is sufficiently large.
\end{corollary}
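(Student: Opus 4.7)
The plan is to apply Theorem~\ref{thm_error_bound} and show that under the uncertainty-free hypothesis the two upper bounds $\bar h_e(t)$ and $\bar h_c(t)$ in Algorithm~\ref{alg:bounds} themselves tend to zero. Because the task is not switched, the task index $\eta$ is fixed for all sufficiently large $t$, so a single branch of the recursion is active and the elapsed time $t-t_{\eta-1}$ grows without bound; the assumption that the reference trajectory data is sufficiently large guarantees that $r(t)$ and $u_r(t)$ in \eqref{control} remain defined along the way, so the recursions are valid for all $t$.

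First I would handle the estimation bound. Setting $q_w=q_v=0$ in \eqref{eq_he} gives $\bar q_\eta=0$, hence $q_\eta=0$ and $b_\eta(t)\equiv 0$. What remains is $\bar h_e(t)^2=a_\eta(t)\bar h_e^2(t_{\eta-1})$, where $a_\eta(t)$ is a fixed multiple of $\varpi_\eta^{\,t-t_{\eta-1}}$ with $\varpi_\eta=1-1/(3\lambda_{\max}(P_\eta))\in(0,1)$. Consequently $\bar h_e(t)$ decays geometrically, and by Theorem~\ref{thm_error_bound} so does $\|\hat x_i(t)-x(t)\|$, proving the first limit.

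Next I would treat the tracking bound. With $q_w=0$ the expression in \eqref{eq_control_parameters} reduces to
\begin{equation*}
\bar h_c(t)^2=\hat a_\eta(t)\bigl(\bar h_c(t_{\eta-1})+\hat r(t_{\eta-1})\bigr)^2+\frac{\beta\|BK\|}{\lambda_{\min}(P_*)}\sum_{l=0}^{t-t_{\eta-1}-1}\lambda^{l}\bar h_e(t-1-l).
\end{equation*}
The first term vanishes since $\hat a_\eta(t)\propto\lambda^{t-t_{\eta-1}}$ with $\lambda\in(0,1)$. The second term is a discrete convolution of $\lambda^{l}$ with the vanishing sequence $\bar h_e$. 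I would show it tends to zero by the standard splitting argument: given $\varepsilon>0$, choose $T_1$ so that $\bar h_e(s)<\varepsilon$ for $s\geq T_1$, split the sum at $l=\lfloor(t-t_{\eta-1})/2\rfloor$, and bound the lower half by $\varepsilon\sum_{l\geq 0}\lambda^l=\varepsilon/(1-\lambda)$ and the upper half by $M\lambda^{\lfloor(t-t_{\eta-1})/2\rfloor}/(1-\lambda)$, where $M=\sup_{s}\bar h_e(s)<\infty$ (itself bounded by the first step). Both halves are arbitrarily small for $t$ large, so $\bar h_c(t)\to 0$, and Theorem~\ref{thm_error_bound} yields $\|x(t)-r(t)\|\to 0$.

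The only mild obstacle is the convolution estimate for $\hat b_\eta(t)$; once it is dispatched by the split-sum trick just sketched, everything else is a direct specialization of Theorem~\ref{thm_error_bound} to the vanishing-uncertainty case.
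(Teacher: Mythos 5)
Your proof is correct and follows exactly the route the paper intends: the paper gives no explicit proof, stating only that the corollary "follows from Theorem~\ref{thm_error_bound}", and your argument supplies the missing details by specializing the bounds $\bar h_e(t)$ and $\bar h_c(t)$ to $q_w=q_v=0$ and showing each tends to zero (the split-sum estimate for the convolution term $\hat b_\eta(t)$ being the only nontrivial step, which you handle correctly).
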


From Theorem~\ref{thm_error_bound}, a  design of $\{h_e(t)\}$ and $\{h_c(t)\}$ for task-switching condition \eqref{evet_condi} is provided in the following corollary.
\begin{corollary}\label{coro_h}
	For task $\eta$ with $t\in [t_{\eta-1}+1,t_{\eta}]$, let 
	 \begin{align*}
h_e(t)=\norm{D_{\eta}}\bar h_e(t), \quad h_c(t)=\norm{D_{\eta}}\bar h_c(t),
	 \end{align*}
 then under the same conditions as in Theorem~\ref{thm_error_bound}   sequences $\{h_e(t)\}_{t=t_{\eta-1}+1}^{t_{\eta}}$ and $\{h_c(t)\}_{t=t_{\eta-1}+1}^{t_{\eta}}$ satisfy the requirement in Lemma~\ref{lem_event}.
\end{corollary}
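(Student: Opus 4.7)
The plan is to derive the corollary as a direct consequence of Theorem~\ref{thm_error_bound} together with the submultiplicative property of the induced $2$-norm, so the proof will be a short chain of inequalities rather than any new analysis. First I would recall the requirement imposed on $h_e(t)$ and $h_c(t)$ in Lemma~\ref{lem_event}: for $t$ in the current task interval the pair must satisfy $\|D_{\eta}(x(t)-\hat x_{s_{\eta}}(t))\|\leq h_e(t)$ and $\|D_{\eta}(x(t)-r(t))\|\leq h_c(t)$. So the task is simply to verify these two inequalities for the proposed choice $h_e(t)=\|D_\eta\|\bar h_e(t)$, $h_c(t)=\|D_\eta\|\bar h_c(t)$ over $t\in[t_{\eta-1}+1,t_\eta]$.

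Next I would apply Theorem~\ref{thm_error_bound}, which under the stated hypotheses gives $\|\hat x_i(t)-x(t)\|\leq \bar h_e(t)$ for every active sensor $i\in\mathcal{V}_\eta$ (and in particular for $i=s_\eta$), together with $\|x(t)-r(t)\|\leq \bar h_c(t)$. Combining this with the standard bound $\|Dv\|\leq \|D\|\,\|v\|$ applied with $D=D_\eta$ and $v=x(t)-\hat x_{s_\eta}(t)$ yields
\begin{equation*}
\|D_{\eta}(x(t)-\hat x_{s_{\eta}}(t))\|\leq \|D_\eta\|\,\bar h_e(t)=h_e(t),
\end{equation*}
and an identical application with $v=x(t)-r(t)$ gives $\|D_{\eta}(x(t)-r(t))\|\leq h_c(t)$. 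These are exactly the two hypotheses required by Lemma~\ref{lem_event}, so the corollary follows.

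There is essentially no obstacle: the nontrivial work is entirely absorbed into Theorem~\ref{thm_error_bound} (whose proof is deferred to the appendix) and into the definition of the induced matrix norm. The only subtlety worth pointing out explicitly is that Theorem~\ref{thm_error_bound} produces the estimate bound uniformly in $i\in\mathcal{V}_\eta$, so specializing to the communicating sensor $i=s_\eta$ is legitimate; and that the interval $[t_{\eta-1}+1,t_\eta]$ matches the regime over which Algorithm~\ref{alg:bounds} defines $\bar h_e(t)$ and $\bar h_c(t)$ via the task-$\eta$ recursion, so no re-indexing is needed.
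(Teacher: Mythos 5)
Your proposal is correct and matches the paper's (implicit) argument: the corollary is an immediate consequence of Theorem~\ref{thm_error_bound} combined with the induced-norm bound $\norm{D_{\eta}v}\leq\norm{D_{\eta}}\norm{v}$, applied with $v=x(t)-\hat x_{s_{\eta}}(t)$ and $v=x(t)-r(t)$ on the interval $[t_{\eta-1}+1,t_{\eta}]$. The remarks on specializing the uniform bound to $i=s_{\eta}$ and on the matching time indexing are accurate and complete the verification.
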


Under Assumption~\ref{ass_switching}, the robot is able to compute  $\{h_e(t)\}_{t=t_{\eta-1}+1}^{t_{\eta}}$ and $\{h_c(t)\}_{t=t_{\eta-1}+1}^{t_{\eta}}$ online such that they can be used in Algorithm~\ref{alg:B} for the event-triggered task switching.
From Remark~\ref{rem_random},  sequences $\{h_e(t)\}_{t=t_{\eta-1}+1}^{t_{\eta}}$ and $\{h_c(t)\}_{t=t_{\eta-1}+1}^{t_{\eta}}$ are also random, meaning that the sequences could respectively have different values and lengths in different realizations.


\subsection{Conditions of task accomplishment}\label{sub_time}
In the following theorem we provide a sufficient condition to ensure the accomplishment of a task. Moreover,  we study the  running time for each task.
\begin{theorem}\label{thm_control}
	Suppose task $\eta-1$ is accomplished, then consider  Algorithms~\ref{alg:B}--\ref{alg:bounds} for task $\eta$. Under the same conditions as in Corollary~\ref{coro_h}, if the length of the reference trajectory for task $\eta$ is sufficiently large with $\lim\limits_{l\rightarrow\infty}\norm{D_{\eta}r_{\eta-1,\eta}(l)-c_{\eta}}=0$, and 
	\begin{align}\label{condition_lim}
	\small
	R_{\eta}>\frac{\sqrt{2\beta\lambda_{\max}(P_*)}\left(\norm{BK}\sqrt{\tilde b_{\eta}}+q_w\right)}{\sqrt{\lambda_{\min}(P_*)}\norm{D_{\eta}}}
	\end{align}
	then task $\eta$ will be accomplished for sure, where 
	$\tilde b_{\eta}=3q_{\eta}\lambda_{\max}(P_{\eta})/\lambda_{\min}(P_{\eta}).$
	Furthermore, the running time for task $\eta$ is upper bounded by $\Delta t$, where 
	\begin{align}\label{optim_11}
	\min&\qquad\qquad  \Delta t\\
	\text{s.t.}&   \qquad\Delta t \geq \mathbb{T}_{\eta} \nonumber\\
	&  \max\limits_{l\in[\Delta t-\mathbb{T}_{\eta},\Delta t)} \left\{ \phi(\eta,l)\right\}\leq R_{\eta},\nonumber
	\end{align}
	where $\phi(\eta,l)=h_c(t_{\eta-1}+l)+\norm{D_{\eta}r_{\eta-1,\eta}(l)-c_{\eta}}.$
\end{theorem}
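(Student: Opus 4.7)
The plan is to turn the error bounds of Theorem~\ref{thm_error_bound} (equivalently Corollary~\ref{coro_h}) into a time-varying sufficient condition for the instantaneous inclusion $x(t)\in\mathbb{O}(c_\eta,R_\eta)$, take the $t\to\infty$ limits of the bounding sequences, and then show that condition \eqref{condition_lim} together with $\lim_{l\to\infty}\norm{D_\eta r_{\eta-1,\eta}(l)-c_\eta}=0$ forces that sufficient condition to hold on some window of length $\mathbb{T}_\eta$.

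First I would write, for $t=t_{\eta-1}+l\in[t_{\eta-1}+1,t_\eta]$, the triangle inequality
$\norm{D_\eta x(t)-c_\eta}\le\norm{D_\eta\bigl(x(t)-r(t)\bigr)}+\norm{D_\eta r(t)-c_\eta}\le h_c(t)+\norm{D_\eta r_{\eta-1,\eta}(l)-c_\eta}=\phi(\eta,l)$
(up to the one-step index shift inherited from \eqref{control}), using $h_c(t)=\norm{D_\eta}\bar h_c(t)$ from Corollary~\ref{coro_h}. By Lemma~\ref{lem_event} (applied to the second branch of $f$), whenever $\phi(\eta,l)\le R_\eta$ we have $x(t)\in\mathbb{O}(c_\eta,R_\eta)$, so any window $l\in[\Delta t-\mathbb{T}_\eta,\Delta t)$ on which $\phi(\eta,\cdot)\le R_\eta$ produces an accomplishment time in the sense of Definition~\ref{def_task}.

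Second I would compute the asymptotics of the two driving sequences from \eqref{eq_he}--\eqref{eq_control_parameters}, using geometric-series identities since $\varpi_\eta=1-1/(3\lambda_{\max}(P_\eta))$ and $\lambda=1-1/(2\lambda_{\max}(P_*))$ are both in $(0,1)$. The transient factor $a_\eta(t)\to 0$ and $\sum_{l\ge 0}\varpi_\eta^l=3\lambda_{\max}(P_\eta)$ give
\[\lim_{t\to\infty}\bar h_e(t)^2=\frac{3q_\eta\lambda_{\max}(P_\eta)}{\lambda_{\min}(P_\eta)}=\tilde b_\eta.\]
Substituting into $\hat b_\eta$ and using $\sum_{l\ge 0}\lambda^l=2\lambda_{\max}(P_*)$ yields
\[\lim_{t\to\infty}\bar h_c(t)=\sqrt{\tfrac{2\beta\lambda_{\max}(P_*)}{\lambda_{\min}(P_*)}}\bigl(\norm{BK}\sqrt{\tilde b_\eta}+q_w\bigr),\]
whose product with $\norm{D_\eta}$ is precisely the right-hand side that condition \eqref{condition_lim} forces to be strictly less than $R_\eta$. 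Combined with the trajectory hypothesis $\norm{D_\eta r_{\eta-1,\eta}(l)-c_\eta}\to 0$, this produces an $L$ such that $\phi(\eta,l)<R_\eta$ for every $l\ge L$, which in turn gives a feasible $\Delta t$ (e.g., $\Delta t=L+\mathbb{T}_\eta$). Feasibility and the event-triggered switching logic in Algorithm~\ref{alg:B} then yield accomplishment.

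Third, for the running-time bound I would invoke the minimality defining \eqref{optim_11}: its minimizer $\Delta t^*$ is the smallest integer such that the last $\mathbb{T}_\eta$ values of $\phi(\eta,\cdot)$ all lie below $R_\eta$, and by the argument of step two this set is nonempty. Since $\phi$ upper bounds the true event-triggered indicator $f(t)$ (indeed $f\le h_c+g_c\le\phi$), the algorithm's stopping time $t_\eta-t_{\eta-1}$ cannot exceed $\Delta t^*$. This delivers the claimed upper bound.

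The main obstacle I foresee is rigorously passing to the limit in $\hat b_\eta(t)$, because that expression is a discrete convolution of $\lambda^l$ against $\bar h_e(t-1-l)$, and $\bar h_e$ itself only converges asymptotically; a clean dominated-convergence or $\varepsilon$-tail argument (splitting the sum at a large fixed $l_0$ and bounding the remainder by $\sum_{l\ge l_0}\lambda^l$ which is arbitrarily small) is needed to legitimize the interchange of limit and sum. A secondary, mostly bookkeeping, subtlety is the one-step index offset between $r(t)$ in \eqref{control} and $r_{\eta-1,\eta}(l)$ appearing in $\phi$, which must be aligned to ensure that the $\max$ in \eqref{optim_11} really captures the stopping criterion of Algorithm~\ref{alg:B}.
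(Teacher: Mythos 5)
Your proposal is correct and follows essentially the same route as the paper: compute $\lim_t \bar h_e(t)=\sqrt{\tilde b_\eta}$ and $\limsup_t \bar h_c(t)$ from the geometric recursions, observe that condition \eqref{condition_lim} together with $\norm{D_\eta r_{\eta-1,\eta}(l)-c_\eta}\to 0$ drives $f(t)$ (via its second branch $h_c+g_c\le\phi$) strictly below $R_\eta$ for all large $t$, and read off feasibility and the bound from the constraints of \eqref{optim_11}. The paper phrases the first part as a contradiction ($t_\eta=\infty$) while you argue directly, and you are more explicit about the tail-splitting needed to pass to the limit in the convolution $\hat b_\eta(t)$; these are refinements of the same argument rather than a different one.
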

\begin{proof}
	We make a conjecture that  task $\eta$ can not be accomplished in finite time, i.e., $t_{\eta}=\infty$.
	It holds that 
$\lim\limits_{t\rightarrow\infty}\bar h_e(t)=\sqrt{\tilde b_{\eta}}.$
Then we derive that
	$$\limsup\limits_{t\rightarrow\infty}\bar h_c(t)\leq  \sqrt{\frac{2\beta\lambda_{\max}(P_*)}{\lambda_{\min}(P_*)}}\left(\norm{BK}\sqrt{\hat b_{\eta}}+q_w\right).$$
		Due to $\lim\limits_{t\rightarrow\infty}\norm{D_{\eta}r(t)-c_{\eta}}=0$, 	
it follows from  Corollary~\ref{coro_h} and Lemma~\ref{lem_event} that 
	$f(t)$  tends to zero as $t$ goes to infinity. In other words, $f(t)$ would be sufficiently small provided with a sufficiently large time $t$. Therefore, 		if \eqref{condition_lim} is satisfied, 
	 there is a finite time $T_*$, such that $f(t)<R_{\eta}$ for $t\geq T_*$.    This contradicts   the conjecture, thus task $\eta$ will be accomplished in finite time for sure.
	Note that the second constraint of optimization problem \eqref{optim_11} ensures that $\Delta t$ is an upper bound of the running time for task $\eta$. Moreover, the provided conditions ensure the feasibility of optimization problem \eqref{optim_11}.  
	%
	
\end{proof}

	Condition \eqref{condition_lim} is satisfied if uncertainty  bounds $q_w$ and $q_v$ are sufficiently small. For example, in the case $q_w=q_v=0$,   we have $\tilde b_{\eta}=0$. As a result,  condition \eqref{condition_lim} is satisfied if and only if $R_{\eta}>0$.
	  Theorem~\ref{thm_control} shows that it is feasible to obtain an upper bound of the task running time against system uncertainties, and  the initial estimation error and tracking deviation. This bound can be used  to evaluate the efficiency of the architecture and the generation of reference trajectories. Although   optimization problem   \eqref{optim_11}  could be non-convex, they can be solved offline by using some existing algorithms, such as enumeration methods or heuristic optimization algorithms.

\section{Numerical Simulations}\label{sec:simulation}
In this section, we revisit the motivating example in Fig.~\ref{fig:robot} with the proposed architecture. 
The robot's state is four-dimensional, and it consists of  longitude and latitude positions and the corresponding velocities. The parameter matrices in \eqref{sys} are assumed to be
$A=\left(\begin{smallmatrix}
1&T&0&0\\
0&1&0&0\\
0&0&1&T\\
0&0&0&1
\end{smallmatrix}\right)$, and $B=\left(\begin{smallmatrix}
0&0\\
T&0\\
0&0\\
0&T
\end{smallmatrix}\right)$, where $T=0.01$.
Each active sensor $i=1,2,\dots,12$ in  Fig.~\ref{fig:robot} (b)--(d) is assumed to have the following measurement matrix $C_i=(1,0,0,0) \text{ if $i$ is odd}$, otherwise $C_i=(0,0,1,0)$.
Assume  
each element of the process uncertainty follows a uniform distribution in [-0.005.0.005]. 
The measurement uncertainty follows a uniform distribution in [-0.01.0.01]. The initial state of the robot $x_0=(100,-0.1,20,-0.06)^{\sf T}$. 
 Each element of the initial estimation error of each sensor follows a uniform distribution in [-5,5].  
The centers of the three targeted sets in Fig.~\ref{fig:robot} (a) are $c_1=(50, 5)^{\sf T},c_2=(20, 50)^{\sf T},c_3=(80, 60)^{\sf T}$ with radius $ 
R_1=5,R_2=10$, and $R_3=15$ respectively. 
The sets are used to represent the desired positions.
 With these parameters and constraint \eqref{eq_reference}, we use cubic spline interpolation to generate three reference trajectories  connecting the centers of the three targeted sets, respectively. The lengths of the three trajectories are 
501, 301, and 601 respectively.

Given each active DSN in Fig.~\ref{fig:robot} (b)--(d),
choose   parameters $\{G_{i,\eta}\}$ and $K$, $\eta=1,2,3$, for \text{Algorithms~\ref{alg:B}--\ref{alg:A}} as in Lemma~\ref{lem_parameter}, such that the eigenvalues of $A-G_{\eta}\tilde C_{\eta}$ and $A+BK$ are placed at $(-0.2,-0.1,0.1,0.2)^{\sf T}$ and  $(-0.1,-0.1,0.1,0.1)^{\sf T}$, respectively. Choose parameter $\alpha_{\eta}=2/(\lambda_{2}(\mathcal{L}_{\eta})+\lambda_{\max}(\mathcal{L}_{\eta}))$. The desired dwelling time for each targeted set is $\mathbb{T}_{\eta}=2$. Communication parameter $L_{\eta}=5.$
The design of $\{h_e(t)\}$ and $\{h_c(t)\}$ is as in Corollary~\ref{coro_h} and Algorithm~\ref{alg:bounds}.

We conduct a Monte Carlo experiment with $100$ runs. To evaluate the performance, we consider   maximum state estimation error $\mu(t)$ and   maximum trajectory tracking deviation $\tau(t)$ respectively defined as follows
\begin{align}\label{simu_notation_error}
\begin{split}
\mu(t)&=\max_{j=1,2,\dots,100}\max_{i\in \mathcal{V}_{\eta}}\norm{\hat x_{i}^j(t)-x^j(t)},\\
\tau(t)&=\max_{j=1,2,\dots,100}\norm{x^j(t)-r(t)},\\
\end{split}
\end{align}
where $x^j(t)$ and $\hat x_{i}^j(t)$ are the robot state and  its estimate by sensor $i$   at time $t$ in the $j$-th run, respectively.
Under the above setting, we run  Algorithms~\ref{alg:B}--\ref{alg:A} and obtain \text{Figs.~\ref{fig:est_and_track_error}--\ref{fig:triggering_times}}.
Fig.~\ref{fig:est_and_track_error} shows the dynamics of  state estimation error $\mu(t)$ and trajectory tracking deviation $\tau(t)$. It shows that $\tau(t)$ becomes stable after transience and $\tau(t)$ is stable for each task except the sharp increase from task switching. We randomly choose one realization to show the performance of the robot in tracking the reference trajectory (position) in Fig.~\ref{fig:balls_tracking}. It shows the robot is able to reach each targeted set successfully and switch to the next task sequentially. In Fig.~\ref{fig:triggering_times}, the dynamics of  event-triggered parameter $f(t)$ in Algorithm~\ref{alg:B} and the task-switching  points are shown. The decrease of $f(t)$ is due to the performance improvement for state estimation and trajectory tracking  as time goes on, while the fluctuation of $f(t)$ is resulted from   task switching.

\begin{figure}[t]
	\centering
	\includegraphics[scale=0.48]{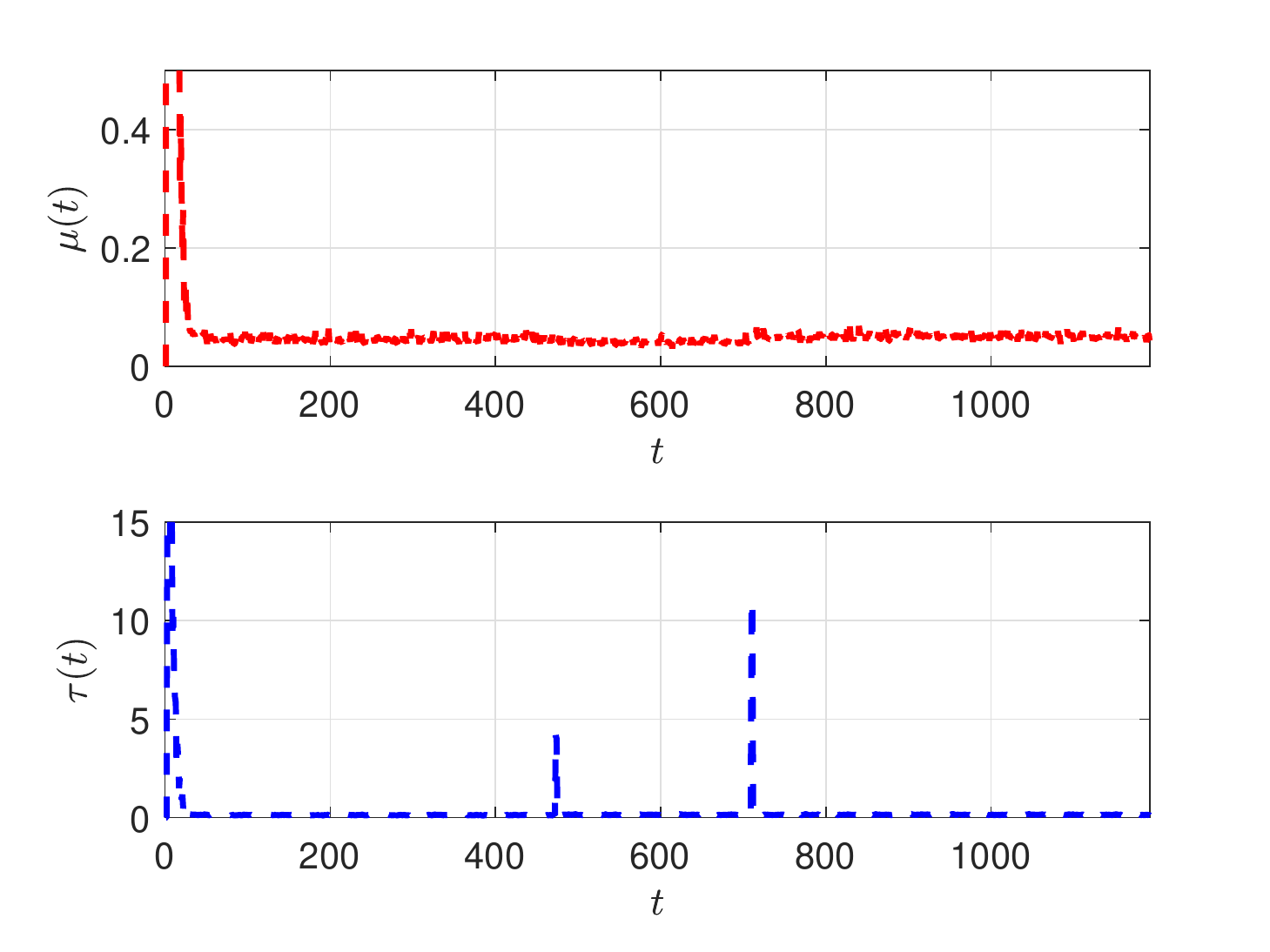}
	\caption{The state estimation error $\mu(t)$ and trajectory tracking deviation $\tau(t)$ of Algorithms~\ref{alg:B}--\ref{alg:A}. The two sharp increases of $\tau(t)$ are resulted from   task switching.  }
		\label{fig:est_and_track_error}
\end{figure}

\begin{figure}[t]
	\centering
	\includegraphics[scale=0.48]{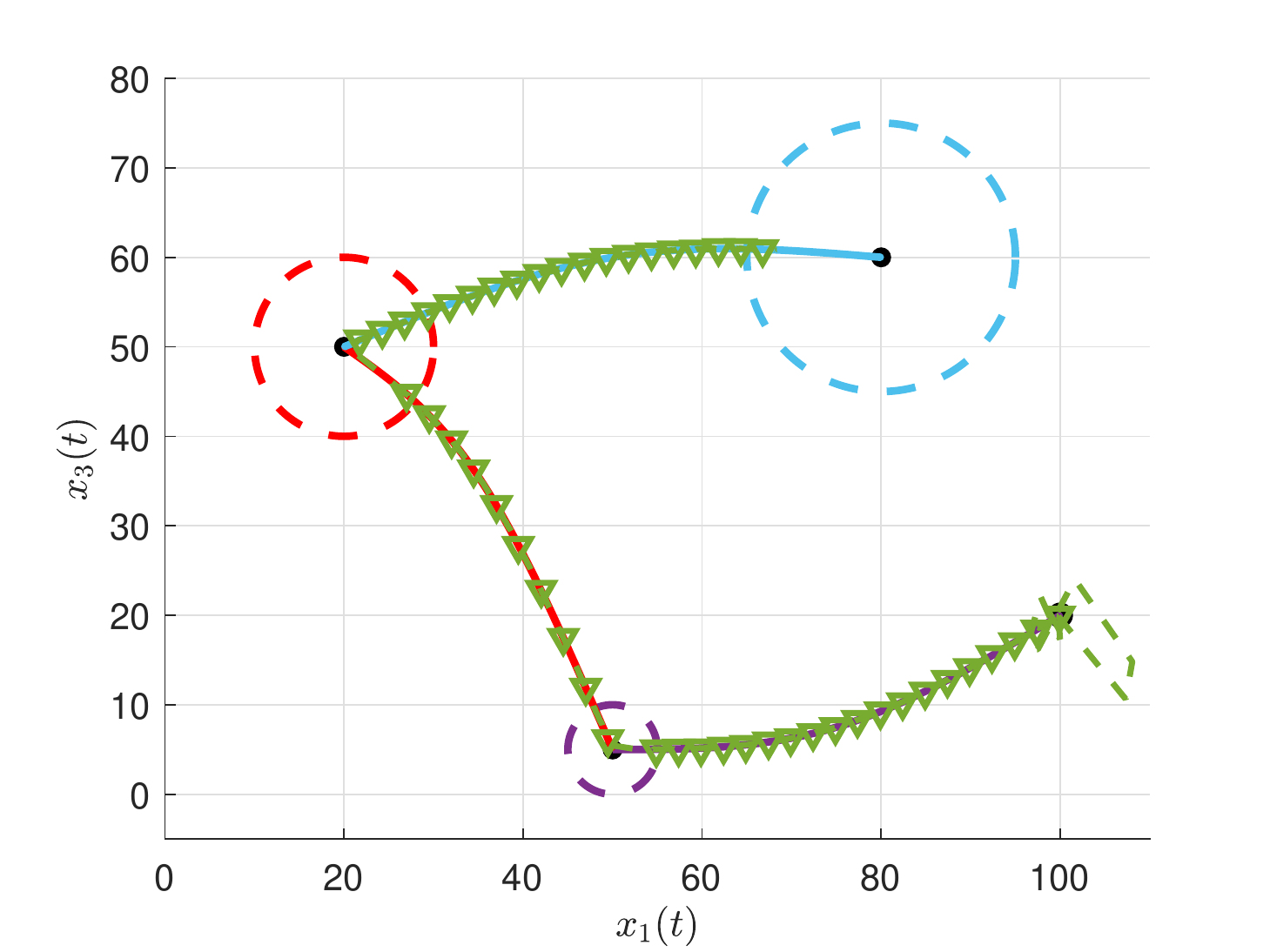}
	\caption{Reference position trajectory  tracking and task switching of Algorithms~\ref{alg:B}--\ref{alg:A}. In the transient time period, since the estimate is not accurate, the estimate-based control input does not drive the robot  to the reference position trajectory. As time goes on, the robot  is able to converge to the reference trajectory and accomplish the three tasks sequentially.}
		\label{fig:balls_tracking}
\end{figure}

\begin{figure}[t]
	\centering
	\includegraphics[scale=0.48]{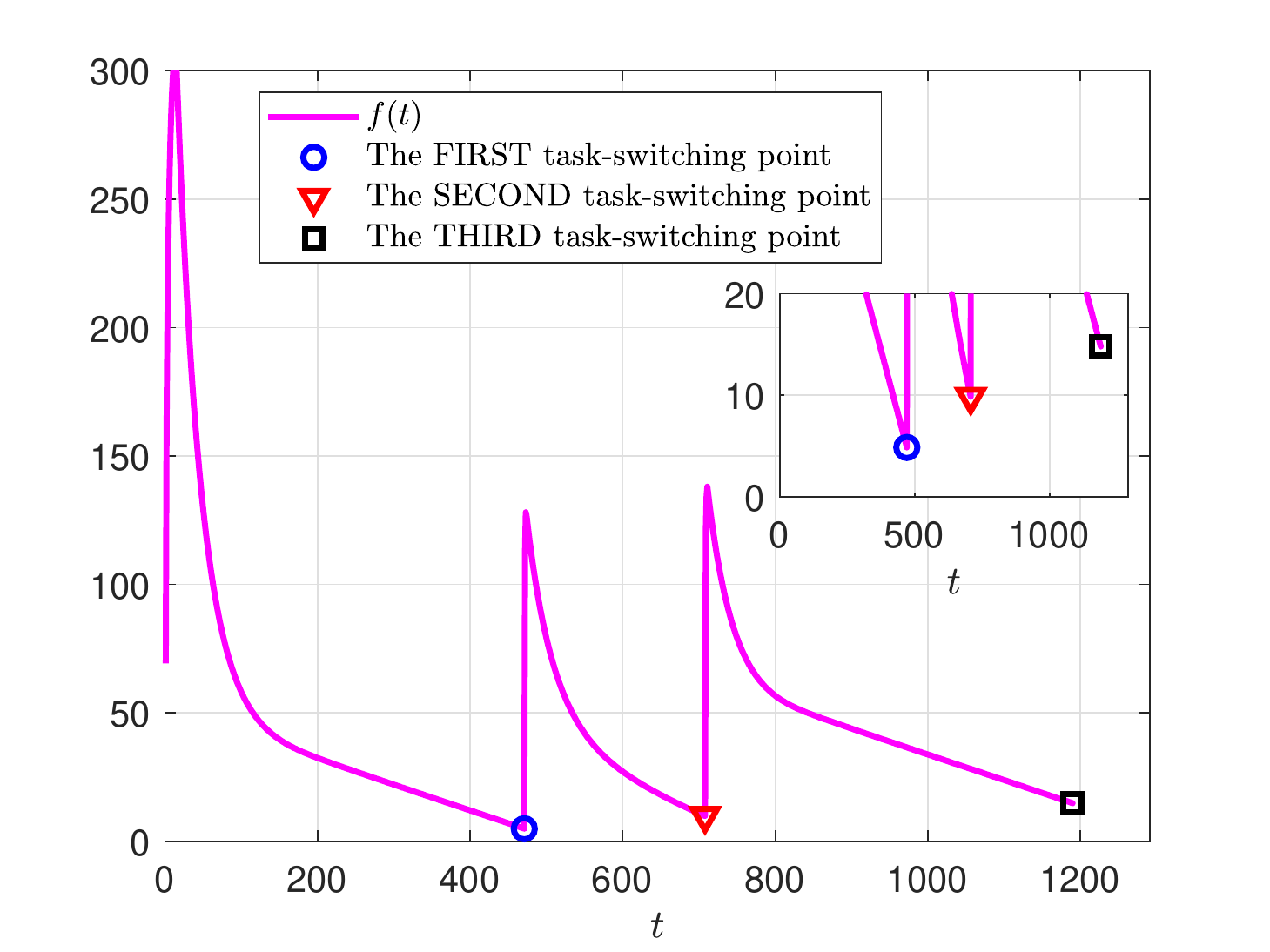}
\caption{The dynamics of event-triggered parameter  $f(t)$ and the task-switching  points. In each task, parameter $f(t)$ decreases until the current task is accomplished. From one task to another, the sharp increase of $f(t)$ is due to the switching of the reference trajectories corresponding to the two sharp increases of $\tau(t)$ in Fig.~\ref{fig:est_and_track_error}. }
\label{fig:triggering_times}
\end{figure}

\begin{figure}[t]
	\centering
	\includegraphics[scale=0.48]{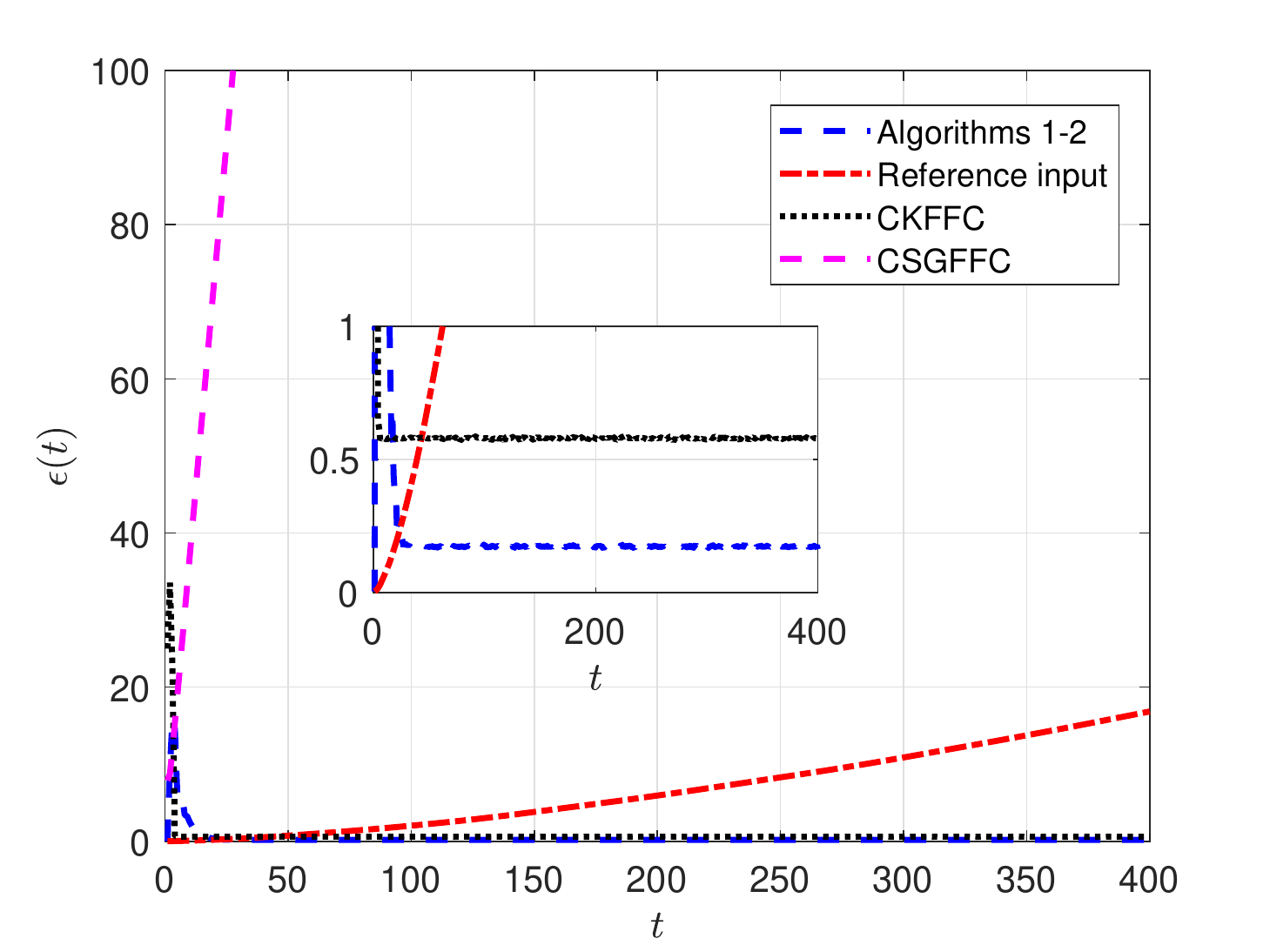}
	\caption{Comparison of navigation errors  of several algorithms for task one. 
		}
	\label{fig:compare}
\end{figure}

Next, we consider task one within time interval [1,400]  under the same setting as in the first case to  compare the navigation performance of Algorithms~\ref{alg:B}--\ref{alg:A} with  some other algorithms. The centralized Kalman filter (CKF) and the CSGE   \cite{khan2014collaborative}   are considered here, and they are equipped with our feedback controller in Algorithm~\ref{alg:B} and then abbreviated by   CKFFC and   CSGEFC, respectively. Moreover, we   study   navigation performance by using the reference input.
We compare these algorithms based on the   average navigation error (trajectory-tracking deviation):
$\epsilon(t)=\frac{1}{100}\sum_{j=1}^{100}\norm{x^j(t)-r(t)}.$
The comparison result is provided in 
Fig.~\ref{fig:compare}, which   shows that the navigation errors of  Algorithms~\ref{alg:B}--\ref{alg:A} and CKFFC are both tending to small neighborhoods of zero. Meanwhile, Algorithms~\ref{alg:B}--\ref{alg:A} outperforms the other three algorithms  in the considered scenario. Note that the CKF is not optimal in this case, since the uncertainties here follow a uniform distribution but not a normal distribution.
Moreover, Fig.~\ref{fig:compare} illustrates that   feedback control outperforms the reference control  for  trajectory tracking in  uncertain environments.

\section{Conclusions}\label{sec:conclusion}
We  studied  how to    navigate a mobile robot over active distributed  sensor networks such that the robot state is able to reach multiple targeted sets in order and stays inside no less than the desired time  by  following  preset reference  trajectories. 
We proposed a new task-switching navigation architecture 
consisting of a task-switching feedback controller for the robot and a two-time-scale distributed estimator for each active sensor. 
With the controller, the robot is able to accomplish a task by following a reference trajectory and switch to the next task when an event-triggered condition is fulfilled.  
With the estimator, each active sensor is able to estimate the robot state. 
We found two time-varying sequences to bound the  state estimation error and trajectory tracking deviation in the architecture, respectively. In addition, we showed that the two sequences  play an essential role in the event-triggered scheme.
Moreover, a sufficient condition for    the accomplishment of each task was established and 
an upper bound of 
the running time of each task was obtained.

\appendix 

\subsection{Useful lemmas}
In this subsection, some lemmas and their proofs are provided. The lemmas are useful for the proof of Theorem~\ref{thm_error_bound}.
Recall  that for task $\eta$, the network $\mathcal{G}_{\eta}=(\mathcal{V}_{\eta},\mathcal{E}_{\eta},\mathcal{A}_{\eta})$ with $|\mathcal{V}_{\eta}|=N_{\eta}$ is active and sensor $s_{\eta}\in \mathcal{V}_{\eta}$ is the sensor sharing estimates with the robot, where $\eta\in \{1,2,\dots,Q_s\}$.  
For task $\eta$ at time $t$,  denote
\begin{align}\label{eq_notations1}
e_i(t)&=\hat x_i(t)-x(t)\in\mathbb{R}^n,\nonumber\\
\hat x_{\avg}(t)&=\sum_{i\in\mathcal{G}_{\eta}}\hat x_{i}(t)/N_{\eta}\in\mathbb{R}^{n\times 1},\nonumber\\
\hat X(t)&=[\hat x_{i}(t)]_{i\in\mathcal{G}_{\eta}}\in\mathbb{R}^{nN_{\eta}\times 1},\nonumber \\
\mathbf{e}_{\avg}(t)&=\hat X(t)-\textbf{1}_{N_{\eta}}\otimes\hat x_{\avg}(t)\in\mathbb{R}^{nN_{\eta}\times 1}, \nonumber\\
\mathbf{e}_{\net}(t)&=\hat x_{\avg}(t)-x(t)\in\mathbb{R}^{n\times 1},  \nonumber\\
E(t)&=[ e_i(t)]_{i\in\mathcal{G}_{\eta}}\in\mathbb{R}^{nN_{\eta}\times 1},\\
Y(t)&=[y_i(t)]_{i\in\mathcal{G}_{\eta}}\in\mathbb{R}^{(\sum_{i\in\mathcal{G}_{\eta}}M_i)\times 1},\nonumber\\
V(t)&=[v_i(t)]_{i\in\mathcal{G}_{\eta}}\in\mathbb{R}^{(\sum_{i\in\mathcal{G}_{\eta}}M_i)\times 1}.\nonumber
\end{align}

\begin{lemma}\label{lem_equalties}
The following equalities hold:
\begin{enumerate}
	\item $\bar I_{\eta}(I_n\otimes A)=(I_n\otimes A)\bar I_{\eta}$
	\item $E(t)=\mathbf{e}_{\avg}(t)+\textbf{1}_{N_{\eta}}\otimes\mathbf{e}_{\net}(t)$
	\item $(I-\bar I_{\eta})(I_{N_{\eta}}\otimes x)=\bold{0} \text{ for any } x\in\mathbb{R}^n$
	\item 
	$(I-\bar I_{\eta})\mathbf{L}_{\eta}^{L_{\eta}}=\mathbf{L}_{\eta}^{L_{\eta}}(I-\bar I_{\eta})$
	\item $(\textbf{1}_{N_{\eta}}^{\sf T}\otimes  I_n)\bar I_{\eta}=(\textbf{1}_{N_{\eta}}^{\sf T}\otimes  I_n)\mathbf{L}_{\eta}^{L_{\eta}}=(\textbf{1}_{N_{\eta}}^{\sf T}\otimes  I_n)$
	\item $\left(\mathbf{L}_{\eta}-\bar I_{\eta}\right)^{L_{\eta}}\mathbf{A}_{\eta}\mathbf{e}_{\avg}(t)=\mathbf{L}_{\eta}^{L_{\eta}}\mathbf{A}_{\eta}\mathbf{e}_{\avg}(t),$
\end{enumerate}
where $L_{\eta}$ is the communication parameter in Algorithm~\ref{alg:A}, and
\begin{align}\label{eq_notations2}
\bar I_{k}&=(\textbf{1}_{N_{\eta}}\otimes  I_n)(\textbf{1}_{N_{\eta}}\otimes  I_n)^{\sf T}/N_{\eta}\in\mathbb{R}^{nN_{\eta}\times nN_{\eta}}, \nonumber\\
\mathbf{G}_{\eta}&=\diag\{G_i\}_{i\in\mathcal{G}_{\eta}}\in\mathbb{R}^{nN_{\eta}\times(\sum_{i\in\mathcal{G}_{\eta}}M_i) },  \\
\mathbf{C}_{\eta}&=\diag\{C_i\}_{i\in\mathcal{G}_{\eta}}\in\mathbb{R}^{(\sum_{i\in\mathcal{G}_{\eta}}M_i)\times nN_{\eta}}\nonumber\\
\mathbf{L}_{\eta}&=I-\alpha_{\eta}(\mathcal{L}_{\eta}\otimes I_n)\in\mathbb{R}^{nN_{\eta}\times nN_{\eta}}\nonumber\\
\mathbf{A}_{\eta}&=I_{N_{\eta}}\otimes (A+BK)\in\mathbb{R}^{nN_{\eta}\times nN_{\eta}}.\nonumber
\end{align}
\end{lemma}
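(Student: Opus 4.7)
The plan is to verify the six identities in turn using elementary Kronecker product algebra together with two structural facts that should be recorded upfront: (a) the projector $\bar I_\eta=(\textbf{1}_{N_\eta}\textbf{1}_{N_\eta}^{\sf T}/N_\eta)\otimes I_n$ is idempotent, and (b) the undirected graph Laplacian satisfies $\mathcal{L}_\eta\textbf{1}_{N_\eta}=\mathbf{0}$ and $\textbf{1}_{N_\eta}^{\sf T}\mathcal{L}_\eta=\mathbf{0}$. The mixed-product rule $(M_1\otimes M_2)(M_3\otimes M_4)=(M_1M_3)\otimes(M_2M_4)$ will do most of the work throughout.

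Identity (1) follows because both sides reduce to $(\textbf{1}_{N_\eta}\textbf{1}_{N_\eta}^{\sf T}/N_\eta)\otimes A$ by the mixed-product rule. Identity (2) is a one-line rearrangement: substituting the definitions from \eqref{eq_notations1} gives $\mathbf{e}_{\avg}(t)+\textbf{1}_{N_\eta}\otimes\mathbf{e}_{\net}(t)=\hat X(t)-\textbf{1}_{N_\eta}\otimes\hat x_{\avg}(t)+\textbf{1}_{N_\eta}\otimes(\hat x_{\avg}(t)-x(t))=\hat X(t)-\textbf{1}_{N_\eta}\otimes x(t)=E(t)$. Identity (3) follows from expanding $\bar I_\eta(\textbf{1}_{N_\eta}\otimes x)$ via the mixed-product rule and using $\textbf{1}_{N_\eta}^{\sf T}\textbf{1}_{N_\eta}=N_\eta$. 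For identity (5), $(\textbf{1}_{N_\eta}^{\sf T}\otimes I_n)\mathbf{L}_\eta=\textbf{1}_{N_\eta}^{\sf T}\otimes I_n$ is obtained by applying (b) to $\mathbf{L}_\eta=I-\alpha_\eta(\mathcal{L}_\eta\otimes I_n)$; iterating gives the $L_\eta$-th power, and the $\bar I_\eta$ half is handled in the same way using $\textbf{1}_{N_\eta}^{\sf T}\textbf{1}_{N_\eta}/N_\eta=1$.

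For identity (4), the key intermediate fact is the stronger statement $\bar I_\eta\mathbf{L}_\eta=\mathbf{L}_\eta\bar I_\eta=\bar I_\eta$, which follows from (b) because $(\textbf{1}_{N_\eta}\textbf{1}_{N_\eta}^{\sf T})\mathcal{L}_\eta=\mathcal{L}_\eta(\textbf{1}_{N_\eta}\textbf{1}_{N_\eta}^{\sf T})=\mathbf{0}$. Inducting gives $\bar I_\eta\mathbf{L}_\eta^{L_\eta}=\mathbf{L}_\eta^{L_\eta}\bar I_\eta=\bar I_\eta$, and subtracting yields (4). Identity (6) is the step that requires slightly more care and is the main bottleneck. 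It proceeds in two moves: first, using $\bar I_\eta\mathbf{L}_\eta=\mathbf{L}_\eta\bar I_\eta=\bar I_\eta^2=\bar I_\eta$, an induction on $k$ shows that all cross-terms in $(\mathbf{L}_\eta-\bar I_\eta)^k$ collapse to give $(\mathbf{L}_\eta-\bar I_\eta)^k=\mathbf{L}_\eta^k-\bar I_\eta$ for every $k\geq 1$. Second, $\bar I_\eta\mathbf{A}_\eta=\mathbf{A}_\eta\bar I_\eta$ by the mixed-product rule, and $\bar I_\eta\mathbf{e}_{\avg}(t)=\mathbf{0}$ since $\bar I_\eta\hat X(t)=\textbf{1}_{N_\eta}\otimes\hat x_{\avg}(t)$ by direct computation, after which identity (3) cancels the second term in $\mathbf{e}_{\avg}(t)=\hat X(t)-\textbf{1}_{N_\eta}\otimes\hat x_{\avg}(t)$. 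Combining the two moves gives $(\mathbf{L}_\eta-\bar I_\eta)^{L_\eta}\mathbf{A}_\eta\mathbf{e}_{\avg}(t)=(\mathbf{L}_\eta^{L_\eta}-\bar I_\eta)\mathbf{A}_\eta\mathbf{e}_{\avg}(t)=\mathbf{L}_\eta^{L_\eta}\mathbf{A}_\eta\mathbf{e}_{\avg}(t)$.

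The main obstacle is the binomial-type collapse in (6): one must rely on the stronger equality $\mathbf{L}_\eta\bar I_\eta=\bar I_\eta$ (not merely that the two matrices commute), so that every term in the expansion of $(\mathbf{L}_\eta-\bar I_\eta)^k$ containing at least one $\bar I_\eta$ factor telescopes to $\pm\bar I_\eta$. Once this is established, the remaining identities are mechanical verifications using the mixed-product rule and the null-space properties of $\mathcal{L}_\eta$, so the proof of the lemma amounts to bookkeeping after the two structural facts (a) and (b) are recorded upfront.
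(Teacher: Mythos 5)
Your proof is correct and follows essentially the same route as the paper's, which simply invokes the fact that $\mathcal{L}_{\eta}$ is symmetric with zero row and column sums for items (1)--(5) and defers item (6) to a cited lemma; your write-up supplies the details (idempotence of $\bar I_{\eta}$, the absorption $\mathbf{L}_{\eta}\bar I_{\eta}=\bar I_{\eta}\mathbf{L}_{\eta}=\bar I_{\eta}$, the collapse $(\mathbf{L}_{\eta}-\bar I_{\eta})^{k}=\mathbf{L}_{\eta}^{k}-\bar I_{\eta}$, and $\bar I_{\eta}\mathbf{e}_{\avg}(t)=\mathbf{0}$) that the paper leaves implicit. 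Note only that you have, correctly, read the evident typos in items (1) and (3) as $I_{N_{\eta}}\otimes A$ and $\textbf{1}_{N_{\eta}}\otimes x$, respectively, since the stated forms do not even match dimensionally.
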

\begin{proof}
	The proof of the last equality is similar to that of Lemma~1.2 in \cite{He2020design}. The rest of equalities are straightforward  derived by noting that $\mathcal{L}_{\eta}$ is a symmetric matrix with row (resp. column) sum equal to zero.	
	\end{proof}

Note that  $\mathbf{e}_{\net}(t)$ in \eqref{eq_notations1} stands for the error between the average estimate of all sensors and the true state, while $\mathbf{e}_{\avg}(t)$ represents the  difference between the estimate of each active sensor and the average estimate.  
With the notations in \eqref{eq_notations1}, given  $\gamma_{\eta}>0$, denote
$$\mathbf{e}(t):=(\textbf{1}_{N_{\eta}}^{\sf T}\otimes\mathbf{e}_{\net}^{\sf T}(t),\gamma_{\eta}\mathbf{e}_{\avg}^{\sf T}(t))^{\sf T}\in\mathbb{R}^{nN_{\eta}\times nN_{\eta}}.$$ 
Instead of analyzing the estimation error $e_i(t)$ directly, 
 we aim to analyze $\mathbf{e}(t)$, whose  dynamics are given in the following lemma.
\begin{lemma}\label{error_dynamic}
	Suppose    network  $\mathcal{G}_{\eta}$   is active at time $t$, then $\mathbf{e}(t)$   satisfies 
	\begin{align}
	\mathbf{e}(t)=&\begin{pmatrix}
	M_{1,1}(\eta)&M_{1,2}(\eta)\\
	M_{2,1}(\eta)&M_{2,2}(\eta)
	\end{pmatrix}\mathbf{e}(t-1)+\begin{pmatrix}
	W_{1,1}(t)\\
	W_{1,2}(t)
	\end{pmatrix},\label{lem_extend}
	\end{align}
	where 
	\begin{align}\label{eq_notation_error1}
	M_{1,1}(\eta)&=I_{N_{\eta}}\otimes\big(A- \sum_{i\in\mathcal{G}_{\eta}}G_{i,k}C_{i}/N_{\eta}\big),\nonumber \\ M_{1,2}(\eta)&=\textbf{1}_{N_{\eta}}\otimes \big(BK(m_{s_{\eta}}\otimes I_n)\nonumber\\
	&\quad+  (\textbf{1}_{N_{\eta}}^{\sf T}\otimes I_n)\mathbf{G}_{\eta}\mathbf{C}_{\eta}/N_{\eta}\big)/\gamma_{\eta},  \\ M_{2,1}(\eta)&=-\gamma_{\eta}\mathbf{L}_{\eta}^{L_{\eta}}(I-\bar I_{\eta})\mathbf{G}_{\eta}\mathbf{C}_{\eta}, \nonumber\\
	M_{2,2}(\eta)&=\left(\mathbf{L}_{\eta}-\bar I_{\eta}\right)^{L_{\eta}}\mathbf{A}_{\eta}+M_{2,1}(\eta)/\gamma_{\eta}, \nonumber\\
	W_{1,1}(t)&=\textbf{1}_{N_{\eta}}\otimes \left( (\textbf{1}_{N_{\eta}}^{\sf T}\otimes I_n)\mathbf{G}_{\eta}V(t)/N_{\eta}-w(t)\right),\nonumber\\	
	W_{1,2}(t)&=\gamma_{\eta}\mathbf{L}_{\eta}^{L_{\eta}}	(I-\bar I_{\eta})\mathbf{G}_{\eta}V(t),\nonumber
	\end{align}
in which $m_{s_{\eta}}\in \mathbb{R}^{1 \times N_{\eta}}$ has all zero elements but one in the $s_{\eta}$-th element. 
\end{lemma}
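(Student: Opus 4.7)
\textbf{Proof plan for Lemma~\ref{error_dynamic}.}

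The plan is to derive the recursions for $\mathbf{e}_{\net}(t+1)$ and $\mathbf{e}_{\avg}(t+1)$ separately from the stacked estimator update, and then assemble them into the block form \eqref{lem_extend}. Writing the per-sensor prediction $\tilde x_i(t)$ of Algorithm~\ref{alg:A} in stacked form gives
\begin{equation*}
\tilde X(t) = \mathbf{A}_\eta \hat X(t) + \mathbf{1}_{N_\eta}\otimes\bigl(Bu_r(t)-BKr(t)\bigr) - \mathbf{G}_\eta \mathbf{C}_\eta E(t) + \mathbf{G}_\eta V(t),
\end{equation*}
where I have used $y_i(t)-C_i\hat x_i(t)=v_i(t)-C_i e_i(t)$ and the identity $(I_{N_\eta}\otimes A)+(I_{N_\eta}\otimes BK)=\mathbf{A}_\eta$, and where the $Bu_r(t)-BKr(t)$ term only appears through the $\mathbf{1}_{N_\eta}\otimes\cdot$ factor because every sensor uses the same reference signals. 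The consensus step of Algorithm~\ref{alg:A} yields $\hat X(t+1)=\mathbf{L}_\eta^{L_\eta}\tilde X(t)$.

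First I would derive the $\mathbf{e}_{\net}$ equation. Apply $(\mathbf{1}_{N_\eta}^{\sf T}\otimes I_n)/N_\eta$ to $\hat X(t+1)$ and use property~5 of Lemma~\ref{lem_equalties} to eliminate $\mathbf{L}_\eta^{L_\eta}$, giving $\hat x_{\avg}(t+1)=(\mathbf{1}_{N_\eta}^{\sf T}\otimes I_n)\tilde X(t)/N_\eta$. A short calculation using $\sum_i \hat u_i(t)=N_\eta\bigl(K(\hat x_{\avg}(t)-r(t))+u_r(t)\bigr)$ then gives $\hat x_{\avg}(t+1)=(A+BK)\hat x_{\avg}(t)-BKr(t)+Bu_r(t)-\tfrac{1}{N_\eta}(\mathbf{1}^{\sf T}\otimes I)\mathbf{G}_\eta\mathbf{C}_\eta E(t)+\tfrac{1}{N_\eta}(\mathbf{1}^{\sf T}\otimes I)\mathbf{G}_\eta V(t)$. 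Subtracting the plant dynamics \eqref{sys} with $u(t)=K(\hat x_{s_\eta}(t)-r(t))+u_r(t)$ and noting that $\hat x_{s_\eta}(t)-\hat x_{\avg}(t)=(m_{s_\eta}\otimes I_n)\mathbf{e}_{\avg}(t)$ cancels the reference and feed-forward terms and isolates the cross-coupling $BK(m_{s_\eta}\otimes I_n)\mathbf{e}_{\avg}(t)$. Splitting $E(t)=\mathbf{e}_{\avg}(t)+\mathbf{1}\otimes\mathbf{e}_{\net}(t)$ then produces the $M_{1,1}(\eta)$ coefficient on $\mathbf{e}_{\net}(t)$ and the $M_{1,2}(\eta)$ coefficient on $\gamma_\eta\mathbf{e}_{\avg}(t)$ (the factor $1/\gamma_\eta$ being absorbed into $M_{1,2}$), together with the driving term $W_{1,1}(t)$. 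Tensoring with $\mathbf{1}_{N_\eta}$ gives the top block of \eqref{lem_extend}.

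Next I would derive the $\mathbf{e}_{\avg}$ equation. The key observation is $\mathbf{1}\otimes\hat x_{\avg}(t+1)=\bar I_\eta \hat X(t+1)=\bar I_\eta\mathbf{L}_\eta^{L_\eta}\tilde X(t)=\bar I_\eta\tilde X(t)$ (the last equality uses $\bar I_\eta\mathbf{L}_\eta^{L_\eta}=\bar I_\eta$, which follows from property~5 and the factorization $\bar I_\eta=(\mathbf{1}_{N_\eta}\otimes I_n/N_\eta)(\mathbf{1}_{N_\eta}^{\sf T}\otimes I_n)$). Hence
\begin{equation*}
\mathbf{e}_{\avg}(t+1)=\bigl(\mathbf{L}_\eta^{L_\eta}-\bar I_\eta\bigr)\tilde X(t)=\mathbf{L}_\eta^{L_\eta}(I-\bar I_\eta)\tilde X(t),
\end{equation*}
after invoking property~4. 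Applied to the four terms of $\tilde X(t)$: property~3 kills the reference/feed-forward contribution $\mathbf{1}_{N_\eta}\otimes(Bu_r-BKr)$; property~1 lets the remaining block-diagonal operator $\mathbf{A}_\eta$ commute with $(I-\bar I_\eta)$, yielding $\mathbf{L}_\eta^{L_\eta}\mathbf{A}_\eta\mathbf{e}_{\avg}(t)$, which by property~6 equals $(\mathbf{L}_\eta-\bar I_\eta)^{L_\eta}\mathbf{A}_\eta\mathbf{e}_{\avg}(t)$; the measurement-innovation term contributes $-\mathbf{L}_\eta^{L_\eta}(I-\bar I_\eta)\mathbf{G}_\eta\mathbf{C}_\eta E(t)$; and the noise term contributes $\mathbf{L}_\eta^{L_\eta}(I-\bar I_\eta)\mathbf{G}_\eta V(t)$. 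Splitting $E(t)$ as before and scaling by $\gamma_\eta$ assembles the second block of \eqref{lem_extend} with exactly the $M_{2,1}(\eta)$, $M_{2,2}(\eta)$, and $W_{1,2}(t)$ stated.

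The hard part is bookkeeping rather than insight: I expect the trickiest step to be correctly tracking the coupling introduced by the asymmetry that the robot's control uses the single estimate $\hat x_{s_\eta}(t)$ while the distributed predictor inside each sensor uses $\hat x_i(t)$. This asymmetry is precisely what makes $M_{1,2}$ non-trivial and forces the introduction of the selector $m_{s_\eta}$; one must take care not to double-count it when substituting $E(t)=\mathbf{e}_{\avg}(t)+\mathbf{1}\otimes\mathbf{e}_{\net}(t)$ into both the $\hat x_{\avg}$-update and the system equation. Everything else is a direct application of the six identities collected in Lemma~\ref{lem_equalties}.
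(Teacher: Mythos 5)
Your derivation is correct and takes essentially the same route as the paper's own proof: obtain the $\mathbf{e}_{\net}$ and $\gamma_\eta\mathbf{e}_{\avg}$ recursions separately from the stacked update $\hat X(t+1)=\mathbf{L}_\eta^{L_\eta}\tilde X(t)$ via the identities of Lemma~\ref{lem_equalties} (averaging with property~5 for the first, projecting with $I-\bar I_\eta$ and properties~1, 3, 4, 6 for the second), then assemble the block form. The only discrepancy is a sign on the $M_{1,2}(\eta)$ block --- your computation, like the paper's own displayed recursion for $\mathbf{e}_{\net}$, yields the negative of the $M_{1,2}(\eta)$ printed in the lemma statement --- which is a typo in the statement and immaterial downstream since only $\norm{M_{1,2}(\eta)}$ is ever used.
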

\begin{proof}
With the equalities in Lemma~\ref{lem_equalties}, we derive the dynamics of $\gamma_{\eta}\mathbf{e}_{\avg}(t)$ as follows
	\begin{align}\label{eq_avg_error}
	&\gamma_{\eta}\mathbf{e}_{\avg}(t+1)\\
	=&M_{2,2}(\eta)\gamma_{\eta}\mathbf{e}_{\avg}(t)+M_{2,1}(\eta)(\textbf{1}_{N_{\eta}}\otimes\mathbf{e}_{\net}(t))+W_{1,2}(t).\nonumber
	\end{align}	
Moreover, we derive that
	\begin{align}\label{eq_recursive_net2}
	&\mathbf{e}_{\net}(t+1)\\
	=&A\mathbf{e}_{\net}(t)-w(t)-BK(m_{s_{\eta}}\otimes I_n)\mathbf{e}_{\avg}(t)\nonumber\\
	&+\frac{1}{N_{\eta}} (\textbf{1}_{N_{\eta}}^{\sf T}\otimes I_n) \bar I_{\eta}\mathbf{L}_{\eta}^{L_{\eta}}\mathbf{G}_{\eta}(Y(t)-\mathbf{C}_{\eta}\hat X(t))\nonumber\\
	=&\left(A-\frac{1}{N_{\eta}}\sum_{i\in\mathcal{G}_{\eta}}G_{i,k}C_{i}\right)\mathbf{e}_{\net}(t)\nonumber\\
	&+\frac{1}{N_{\eta}} (\textbf{1}_{N_{\eta}}^{\sf T}\otimes I_n)\mathbf{G}_{\eta}V(t)-w(t)\nonumber\\
	&-\bigg(BK(m_{s_{\eta}}\otimes I_n)+\frac{1}{N_{\eta}} (\textbf{1}_{N_{\eta}}^{\sf T}\otimes I_n)\mathbf{G}_{\eta}\mathbf{C}_{\eta}\bigg)\mathbf{e}_{\avg}(t),\nonumber
	\end{align}	
	where the last equality is obtained from the second and the fourth equalities in Lemma~\ref{lem_equalties}.
	By \eqref{eq_avg_error}--\eqref{eq_recursive_net2},  the conclusion holds.
\end{proof}

\begin{lemma}\label{lem_inequality}
	Assume $P_1\in\mathbb{R}^{m\times m}$ and $P_2\in\mathbb{R}^{n\times n}$ are positive semi-definite matrices, and $A\in\mathbb{R}^{m\times m}$, $B\in\mathbb{R}^{m\times n}$, $C\in\mathbb{R}^{n\times m}$, $D\in\mathbb{R}^{n\times n}$. Given     $\tau>0$, the following inequality holds
$\left(\begin{smallmatrix}
A&B\\
C&D
\end{smallmatrix}\right)^{\sf T}	\left(\begin{smallmatrix}
P_1&0\\
0&P_2
\end{smallmatrix}\right)	\left(\begin{smallmatrix}
A&B\\
C&D
\end{smallmatrix}\right)\preceq 	\left(\begin{smallmatrix}
\hat P_1&0\\
0&\hat P_2
\end{smallmatrix}\right),$
	where $\hat P_1=(1+\tau)(A^{\sf T}P_1A+C^{\sf T}P_2C)$ and $\hat P_2=(1+1/\tau)(B^{\sf T}P_1B+D^{\sf T}P_2D).$
\end{lemma}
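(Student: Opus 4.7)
The plan is to reduce this block matrix inequality to the standard Young-type bound $2u^{\sf T}Pw \leq \tau u^{\sf T}Pu + (1/\tau) w^{\sf T}Pw$ valid for $P\succeq 0$ and $\tau>0$. To do so, I would split the block matrix along its block-columns, $M := \bigl(\begin{smallmatrix}A&B\\ C&D\end{smallmatrix}\bigr) = X+Y$, with $X := \bigl(\begin{smallmatrix}A&0\\ C&0\end{smallmatrix}\bigr)$ carrying only the first block-column and $Y := \bigl(\begin{smallmatrix}0&B\\ 0&D\end{smallmatrix}\bigr)$ carrying only the second. Writing $P := \blockdiag\{P_1,P_2\}$, direct multiplication gives the two block-diagonal quadratic forms
\begin{equation*}
X^{\sf T}PX = \blockdiag\{A^{\sf T}P_1A+C^{\sf T}P_2C,\ 0\},\quad Y^{\sf T}PY = \blockdiag\{0,\ B^{\sf T}P_1B+D^{\sf T}P_2D\},
\end{equation*}
so that $(1+\tau) X^{\sf T}PX + (1+1/\tau) Y^{\sf T}PY$ matches exactly the block-diagonal matrix on the right-hand side, with $\hat P_1$ and $\hat P_2$ as stated.

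The remaining step is to establish the matrix Young inequality
\begin{equation*}
(X+Y)^{\sf T} P (X+Y) \preceq (1+\tau)\, X^{\sf T}PX + (1+1/\tau)\, Y^{\sf T}PY.
\end{equation*}
I would argue vectorwise: for any conformable $v$, set $u := Xv$ and $w := Yv$ and expand $v^{\sf T}(X+Y)^{\sf T}P(X+Y)v = u^{\sf T}Pu + w^{\sf T}Pw + 2u^{\sf T}Pw$. Since $P\succeq 0$ under the hypothesis that $P_1,P_2\succeq 0$, the square $(\sqrt{\tau}\,u - (1/\sqrt{\tau})\,w)^{\sf T} P (\sqrt{\tau}\,u - (1/\sqrt{\tau})\,w) \geq 0$ rearranges to $2u^{\sf T}Pw \leq \tau u^{\sf T}Pu + (1/\tau) w^{\sf T}Pw$. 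Substituting back and recognizing the right-hand side as $v^{\sf T}\bigl[(1+\tau)X^{\sf T}PX + (1+1/\tau)Y^{\sf T}PY\bigr]v$ closes the inequality for arbitrary $v$, hence in the Loewner order.

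Combining these two steps yields precisely the statement of the lemma. There is no real obstacle here; the only care is to verify that block-diagonality of $P$ plus $P_1,P_2\succeq 0$ suffices to validate the Young-type completion of the square, and that the column-split decomposition isolates the cross-terms so that they are exactly what the cross-term bound absorbs into $\tau$ and $1/\tau$. A scaling argument also explains the form: optimizing the bound over $\tau$ recovers the Cauchy--Schwarz inequality for the off-diagonal block, which matches the intuition that this lemma is a parametric version of the Schur-complement estimate.
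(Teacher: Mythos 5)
Your proof is correct and is essentially the paper's argument: the paper likewise expands the quadratic form $x^{\sf T}M^{\sf T}PMx$ for an arbitrary split vector $x=(x_1^{\sf T},x_2^{\sf T})^{\sf T}$ and absorbs the cross terms via the two completed squares $(\sqrt{\tau}Ax_1-\tfrac{1}{\sqrt{\tau}}Bx_2)^{\sf T}P_1(\cdot)\geq 0$ and $(\sqrt{\tau}Cx_1-\tfrac{1}{\sqrt{\tau}}Dx_2)^{\sf T}P_2(\cdot)\geq 0$, which are exactly your single Young-type square $(\sqrt{\tau}\,Xv-\tfrac{1}{\sqrt{\tau}}\,Yv)^{\sf T}P(\cdot)\geq 0$ unpacked blockwise. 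Your column-split packaging is only a cosmetic reorganization of the same computation, so no further comparison is needed.
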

\begin{proof}
	Let $x=(x_1^{\sf T},x_2^{\sf T})^{\sf T}\in\mathbb{R}^{m+n}\neq 0$.  Then
	\begin{align}
	&x^{\sf T}	\begin{pmatrix}
	A&B\\
	C&D
	\end{pmatrix}^{\sf T}	\begin{pmatrix}
	P_1&0\\
	0&P_2
	\end{pmatrix}	\begin{pmatrix}
	A&B\\
	C&D
	\end{pmatrix}x\nonumber\\
	=&x_1^{\sf T}(A^{\sf T}P_1A+C^{\sf T}P_2C)x_1+x_2^{\sf T}(B^{\sf T}P_1B+D^{\sf T}P_2D)x_2\nonumber\\
	&+ x_1^{\sf T}(A^{\sf T}P_1B+C^{\sf T}P_2D)x_2+x_2^{\sf T}(B^{\sf T}P_1A+D^{\sf T}P_2C)x_1\nonumber\\
	\overset{(a)}{\leq} &(1+\tau)x_1^{\sf T}(A^{\sf T}P_1A+C^{\sf T}P_2C)x_1\nonumber\\
	&+(1+\frac{1}{\tau})x_2^{\sf T}(B^{\sf T}P_1B+D^{\sf T}P_2D)x_2,\label{ineq}
	\end{align}
	where $(a)$ holds due to 
	\begin{align*}
	\small
	\left(\sqrt{\tau}Ax_1-\frac{1}{\sqrt{\tau}}Bx_2\right)^{\sf T}P_1\left(\sqrt{\tau}Ax_1-\frac{1}{\sqrt{\tau}}Bx_2\right)&\succeq 0\\
	\left(\sqrt{\tau}Cx_1-\frac{1}{\sqrt{\tau}}Dx_2\right)^{\sf T}P_2\left(\sqrt{\tau}Cx_1-\frac{1}{\sqrt{\tau}}Dx_2\right)&\succeq 0.
	\end{align*}
	The conclusion follows from \eqref{ineq} for any $x\neq 0$.
\end{proof}
\begin{lemma}\label{lem_riccati}\cite{anderson2012optimal}
	If   $F\in\mathbb{R}^{n\times n}$ is Schur stable, then the algebraic  Riccati equation $F^{\sf T}PF+I_n=P$ has a unique finite solution  
	$P=\sum_{i=0}^{\infty}(F^i)^{\sf T}F^i$.
\end{lemma}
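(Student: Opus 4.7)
The plan is to separately establish existence of the series as a finite matrix, verification that it solves the Lyapunov-type equation, and uniqueness of the solution. The central fact I will use is that Schur stability of $F$ implies exponential decay of $\|F^i\|$: since the spectral radius $\rho(F)<1$, Gelfand's formula $\lim_{i\to\infty}\|F^i\|^{1/i}=\rho(F)$ lets me pick any $\rho\in(\rho(F),1)$ and a constant $C>0$ such that $\|F^i\|\le C\rho^i$ for all $i\ge 0$. From this I get the bound $\|(F^i)^{\sf T}F^i\|\le C^2\rho^{2i}$, so the partial sums $P_N=\sum_{i=0}^{N}(F^i)^{\sf T}F^i$ form a Cauchy sequence in the induced $2$-norm and converge to a finite matrix $P\succeq 0$.

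Next I would verify that this $P$ satisfies $F^{\sf T}PF+I_n=P$. The idea is an index-shift computation: since the series converges absolutely I may interchange $F^{\sf T}(\cdot)F$ with the infinite sum and obtain
\begin{equation*}
F^{\sf T}PF=\sum_{i=0}^{\infty}F^{\sf T}(F^i)^{\sf T}F^iF=\sum_{i=0}^{\infty}(F^{i+1})^{\sf T}F^{i+1}=P-I_n,
\end{equation*}
which rearranges to the claimed equation. The finiteness in the first equality is justified by the exponential bound above.

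For uniqueness, suppose $P_1$ and $P_2$ both solve $F^{\sf T}PF+I_n=P$ and set $\Delta=P_1-P_2$. Subtracting the two equations gives $F^{\sf T}\Delta F=\Delta$, and iterating this identity $k$ times yields $\Delta=(F^k)^{\sf T}\Delta F^k$. Taking the induced $2$-norm and using $\|F^k\|\le C\rho^k$ gives $\|\Delta\|\le C^2\rho^{2k}\|\Delta\|$, so letting $k\to\infty$ forces $\Delta=0$. Combining the three steps gives existence, the explicit series representation, and uniqueness of the finite solution.

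The only subtle point I expect is the passage from Schur stability (an eigenvalue condition) to the uniform geometric bound $\|F^i\|\le C\rho^i$, since a non-diagonalizable $F$ introduces polynomial factors that have to be absorbed into $\rho^i$; this is handled cleanly via Gelfand's formula or, alternatively, by Jordan normal form. Everything else reduces to routine manipulations with absolutely convergent matrix series.
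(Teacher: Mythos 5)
Your proof is correct and complete: the geometric bound $\|F^i\|\le C\rho^i$ from Schur stability, the index-shift verification that the series solves the equation, and the iteration $\Delta=(F^k)^{\sf T}\Delta F^k$ for uniqueness are exactly the standard argument for the discrete Lyapunov equation. The paper itself gives no proof (it only cites a textbook), and your argument is the same one found in that standard reference, so there is nothing to flag.
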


\begin{lemma}\label{lem_parameters}
	Under Assumptions \ref{ass_stabilization}--\ref{ass_switching}, if 
  conditions \eqref{L_bound} and \eqref{eq_rho_gamma} are satisfied, then  
	\begin{align*}
	\bar M_{1,1}(\eta)&\preceq -\frac{1}{3}I,\quad 									\bar M_{2,2}(\eta)\preceq -\frac{1}{3}I,
	\end{align*}
	where 
	\begin{align}\label{eq_MP}
	\small
	\bar M_{1,1}(\eta)&=(1+\rho_{\eta})(1+\tau_{\eta})\bigg(M_{1,1}^{\sf T}(\eta)\left(I_{N_{\eta}}\otimes P_{\eta}\right)M_{1,1}(\eta)\nonumber\\
	&+M_{2,1}^{\sf T}(\eta)M_{2,1}(\eta)\bigg) -\left(I_{N_{\eta}}\otimes P_{\eta}\right), \nonumber\\
	\bar M_{2,2}(\eta)&=(1+\rho_{\eta})(1+\frac{1}{\tau_{\eta}})\bigg(M_{1,2}^{\sf T}(\eta)\left(I_{N_{\eta}}\otimes P_{\eta}\right)M_{1,2}(\eta)\nonumber\\
	&\quad+M_{2,2}^{\sf T}(\eta)M_{2,2}(\eta)\bigg)-I,
	\end{align}
	where $P_{\eta}$ is defined in \eqref{eq_riccati}.
\end{lemma}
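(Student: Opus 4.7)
The plan is to verify the two Loewner-order inequalities for $\bar M_{1,1}(\eta)$ and $\bar M_{2,2}(\eta)$ separately, since they involve independent block diagonals. For each, the strategy is (i) simplify the dominant quadratic term via the discrete algebraic Riccati equation \eqref{eq_riccati}, (ii) bound the residual coupling term using the spectral contraction $\|(I-\bar I_\eta)\mathbf{L}_\eta^{L_\eta}\| \leq \lambda_{c,\eta}^{L_\eta}$, which follows from the eigen-decomposition of $\mathcal{L}_\eta$ on the subspace orthogonal to $\mathrm{span}(\textbf{1}_{N_\eta}\otimes I_n)$ together with $\alpha_\eta \in (0, 2/\lambda_{\max}(\mathcal{L}_\eta))$, and (iii) apply the conditions \eqref{L_bound} and \eqref{eq_rho_gamma} so that each residual contributes at most $1/3$ to the Loewner bound.

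For $\bar M_{1,1}(\eta)$: Lemma~\ref{lem_parameter} gives $M_{1,1}(\eta) = I_{N_\eta}\otimes(A-G_\eta\tilde C_\eta)$, so the Riccati equation yields $M_{1,1}^{\sf T}(I_{N_\eta}\otimes P_\eta)M_{1,1} = I_{N_\eta}\otimes(P_\eta-I_n)$. The spectral bound on $(I-\bar I_\eta)\mathbf{L}_\eta^{L_\eta}$ together with $\|\mathbf{G}_\eta\mathbf{C}_\eta\|\leq\hat q_\eta$ yields $M_{2,1}^{\sf T}M_{2,1}\preceq\gamma_\eta^2\lambda_{c,\eta}^{2L_\eta}\hat q_\eta^2 I$. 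Substituting into \eqref{eq_MP} with $\kappa:=(1+\rho_\eta)(1+\tau_\eta)$ gives $\bar M_{1,1}(\eta) \preceq (\kappa-1)(I_{N_\eta}\otimes P_\eta) - \kappa I + \kappa\gamma_\eta^2\lambda_{c,\eta}^{2L_\eta}\hat q_\eta^2 I$; bounding $I_{N_\eta}\otimes P_\eta\preceq\lambda_{\max}(P_\eta)I$ reduces the desired inequality to a scalar one. The condition $L_\eta\geq d_\eta$ makes $\kappa\gamma_\eta^2\lambda_{c,\eta}^{2L_\eta}\hat q_\eta^2\leq 1/3$, and the first half of \eqref{eq_rho_gamma} equivalently says $\kappa(\lambda_{\max}(P_\eta)-1)\leq\lambda_{\max}(P_\eta)-2/3$, which together deliver $\bar M_{1,1}(\eta)\preceq-I/3$.

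For $\bar M_{2,2}(\eta)$: using the Kronecker mixed-product rule, $M_{1,2}^{\sf T}(I_{N_\eta}\otimes P_\eta)M_{1,2}=(N_\eta/\gamma_\eta^2)Q^{\sf T}P_\eta Q$ where $Q := BK(m_{s_\eta}\otimes I_n)+(\textbf{1}_{N_\eta}^{\sf T}\otimes I_n)\mathbf{G}_\eta\mathbf{C}_\eta/N_\eta$. The bound $\sqrt{N_\eta}\|Q\|\leq f_\eta$ and $Q^{\sf T}P_\eta Q\preceq\lambda_{\max}(P_\eta)\|Q\|^2 I$ give $M_{1,2}^{\sf T}(I\otimes P_\eta)M_{1,2}\preceq(f_\eta^2\lambda_{\max}(P_\eta)/\gamma_\eta^2)I$; the second half of \eqref{eq_rho_gamma} then ensures $(1+\rho_\eta)(1+1/\tau_\eta)$ times this is at most $I/3$. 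For $M_{2,2}(\eta)$, the commutation identities from Lemma~\ref{lem_equalties} (in particular $\bar I_\eta$ commutes with $\mathbf{L}_\eta$ and with $\mathbf{A}_\eta$) together with $\|\mathbf{A}_\eta\|=\|A+BK\|$ and the $M_{2,1}/\gamma_\eta$ bound yield $\|M_{2,2}(\eta)\|\leq\lambda_{c,\eta}^{L_\eta}(\|A+BK\|+\hat q_\eta)$, and the condition $L_\eta\geq e_\eta$ then delivers $(1+\rho_\eta)(1+1/\tau_\eta)\|M_{2,2}(\eta)\|^2\leq 1/3$. Summing the two contributions gives $\bar M_{2,2}(\eta)\preceq-I/3$.

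The main technical obstacle will be rigorously establishing the norm bound $\sqrt{N_\eta}\|Q\|\leq f_\eta$: a naive triangle inequality on the block-row matrix $[G_1C_1,\ldots,G_{N_\eta}C_{N_\eta}]/N_\eta$ yields only $\hat q_\eta/\sqrt{N_\eta}$, rather than $\|\sum_{i\in\mathcal{V}_\eta}G_{i,\eta}C_i/N_\eta\|$, so obtaining the stated $f_\eta$ appears to require either exploiting that the coupling acts on vectors $\mathbf{e}_{\avg}(t)$ satisfying $(\textbf{1}_{N_\eta}^{\sf T}\otimes I_n)\mathbf{e}_{\avg}(t)=0$ (so only the centered combination $\sum_i G_iC_i/N_\eta$ contributes effectively), or a direct Loewner-order manipulation of $Q^{\sf T}P_\eta Q$ that bypasses the scalar operator norm. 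Secondarily, deriving the consensus contraction for $(\mathbf{L}_\eta-\bar I_\eta)^{L_\eta}\mathbf{A}_\eta$ requires careful bookkeeping with the commutation relations of $\bar I_\eta$, $\mathbf{L}_\eta$, and $\mathbf{A}_\eta$ from Lemma~\ref{lem_equalties} to ensure that the decay factor $\lambda_{c,\eta}^{L_\eta}$ indeed carries through.
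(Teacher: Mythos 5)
Your proposal follows essentially the same route as the paper's proof: the same four-way decomposition of the two Loewner inequalities (Riccati identity for the $M_{1,1}(\eta)$ block, the consensus contraction $\lambda_{c,\eta}^{L_{\eta}}$ for $M_{2,1}(\eta)$ and $M_{2,2}(\eta)$, and $\gamma_{\eta}$-scaling for $M_{1,2}(\eta)$), with conditions \eqref{L_bound} and \eqref{eq_rho_gamma} allocating a $1/3$ budget to each residual exactly as in the paper. Your scalar reduction of the $\bar M_{1,1}$ inequality via $I_{N_\eta}\otimes P_\eta\preceq\lambda_{\max}(P_\eta)I$ is a harmless repackaging of the paper's eigenvalue argument, and your bounds on $M_{2,1}$ and $M_{2,2}$ reproduce the paper's use of $\|(I-\bar I_\eta)\mathbf{L}_\eta^{L_\eta}\|\le\lambda_{c,\eta}^{L_\eta}$ and $\|\mathbf{G}_\eta\mathbf{C}_\eta\|\le\hat q_\eta$.

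The obstacle you flag—justifying $\sqrt{N_\eta}\,\|Q\|\le f_\eta$, where the block row $(\textbf{1}_{N_\eta}^{\sf T}\otimes I_n)\mathbf{G}_\eta\mathbf{C}_\eta/N_\eta$ has operator norm $\bigl\|\sum_i G_iC_iC_i^{\sf T}G_i^{\sf T}\bigr\|^{1/2}/N_\eta$ rather than $\|\sum_i G_iC_i/N_\eta\|$—is genuine, but it is not a gap you introduced: the paper's own treatment of \eqref{pf_02} consists of the single remark that $\|M_{1,2}(\eta)\|$ is small for large $\gamma_\eta$ and never verifies that $f_\eta$ as defined in Theorem~\ref{thm_error_bound} dominates $\sqrt{N_\eta}\|Q\|$. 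Your explicit computation $M_{1,2}^{\sf T}(I_{N_\eta}\otimes P_\eta)M_{1,2}=(N_\eta/\gamma_\eta^2)Q^{\sf T}P_\eta Q$ is in fact more careful than the paper at this point, and your suggested remedies (restricting to the zero-mean subspace containing $\mathbf{e}_{\avg}(t)$, or replacing $f_\eta$ by a bound built from $\max_i\norm{G_{i,\eta}C_i}$) are the natural ways to close it.
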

\begin{proof}
	Consider the claims in \eqref{pf_01}--\eqref{pf_04}. 
	\begin{align}
	&(1+\rho_{\eta})(1+\tau_{\eta})M_{1,1}^{\sf T}(\eta)(I_{N_{\eta}}\otimes P_{\eta})M_{1,1}(\eta)\nonumber\\ 
	\preceq& \left(I_{N_{\eta}}\otimes P_{\eta}\right)-\frac{2}{3}I\label{pf_01}\\ 
	&(1+\rho_{\eta})(1+\frac{1}{\tau_{\eta}})M_{1,2}^{\sf T}(\eta)\left(I_{N_{\eta}}\otimes P_{\eta}\right)M_{1,2}(\eta)\nonumber\\
	\preceq& I-\frac{2}{3}I\label{pf_02}
	\end{align}
	\begin{align}
	(1+\rho_{\eta})(1+\tau_{\eta})M_{2,1}^{\sf T}(\eta)M_{2,1}(\eta)&\preceq \frac{1}{3}I\label{pf_03}\\
	(1+\rho_{\eta})(1+\frac{1}{\tau_{\eta}}) M_{2,2}^{\sf T}(\eta)M_{2,2}(\eta)&\preceq \frac{1}{3}I.\label{pf_04}
	\end{align}
	%
	If \eqref{pf_01}--\eqref{pf_04} are satisfied simultaneously, then the conclusion holds.
	Thus, we focus on the proofs of \eqref{pf_01}--\eqref{pf_04} in the following.

	1) First, we consider \eqref{pf_01}.
	
	From Lemma~\ref{lem_parameter}, $A-G_{\eta}\tilde C_{\eta}$ is Schur stable, then $P_{\eta}$ in \eqref{eq_riccati} is well defined according to Lemma~\ref{lem_riccati}.  By the Kronecker product property, we have 
	\begin{align}\label{pf_1}
	M_{1,1}^{\sf T}(\eta)\left(I_{N_{\eta}}\otimes P_{\eta}\right)M_{1,1}(\eta)+I=I_{N_{\eta}}\otimes P_{\eta}.
	\end{align} 
	Substitute  \eqref{pf_1} into \eqref{pf_01},  then \eqref{pf_01} is equivalent to 
$	(1+\rho_{\eta})(1+\tau_{\eta})(P_{\eta}-I)\preceq P_{\eta}-\frac{2}{3}I. $
	Since $P_{\eta}$ is a symmetric positive definite matrix, there is a nonsingular matrix $J$ such that 
	$P_{\eta}=J\diag\{\lambda_i(P_{\eta})\}_{i=1}^{n}J^{-1}$, where $\lambda_i(P_{\eta})$ denotes the $i$-th eigenvalue of matrix $P_{\eta}$. Then   \eqref{pf_01} is equivalent to 
$	(1+\rho_{\eta})(1+\tau_{\eta})(\lambda_i(P_{\eta})-1)\preceq \lambda_i(P_{\eta})-\frac{2}{3},$
	for any $ i=1,\dots,n.$
	The above inequalities hold simultaneously from the first condition in \eqref{eq_rho_gamma}  by letting subscript $j=\eta$.

	2) Second, we consider \eqref{pf_02}.
	
	According to the form of  $M_{1,2}(\eta)$ in \eqref{eq_notation_error1}, $\norm{M_{1,2}(\eta)}$ is sufficiently small provided that $\gamma_{\eta}$ is large enough. We derive that when $L_{\eta}$ is satisfied with the second condition in \eqref{eq_rho_gamma} by letting subscript $j=\eta$,   then \eqref{pf_02} holds.

	3) It remains to prove \eqref{pf_03}--\eqref{pf_04}.
	
	The results in \eqref{pf_03}--\eqref{pf_04}  hold if $\norm{M_{2,1}(\eta)}\leq \left(3(1+\rho_{\eta})(1+\tau_{\eta})\right)^{-1/2}$ and $\norm{M_{2,2}(\eta)}\leq \left(3(1+\rho_{\eta})(1+1/\tau_{\eta})\right)^{-1/2}$.
	Note that $\mathcal{L}_{\eta}$  is a symmetric matrix, then we derive that 
	the eigenvalues of $I-\alpha_{\eta}(\mathcal{L}_{\eta}\otimes I_n)-\bar I_{\eta}$ is zero and $1-\alpha_{\eta}\lambda_{i}(\mathcal{L}_{\eta}),i=2,\dots,N,$ with each repeated for $n$ times. Thus, it holds that $	\norm{I-\alpha_{\eta}(\mathcal{L}_{\eta}\otimes I_n)-\bar I_{\eta}}_2 
	=\lambda_{c,\eta}.$
	Since the active network is connected, $\lambda_{2}(\mathcal{L}_{\eta})>0$. Then due to $\alpha_{\eta}\in(0,\frac{2}{\lambda_{\max}(\mathcal{L}_{\eta})})$, we have $\lambda_{c,\eta}<1$.
	Since $(I-\bar I_{\eta})\mathbf{G}_{\eta}\mathbf{C}_{\eta}$  belongs to the set orthogonal to consensus set, by Lemma 4.4 in \cite{kar2013distributed}, we have 
$\norm{\left(I-\alpha_{\eta}(\mathcal{L}_{\eta}\otimes I_n)\right)^{L_{\eta}}(I-\bar I_{\eta})\mathbf{G}_{\eta}\mathbf{C}_{\eta}}
\leq \lambda_{c,\eta}^{L_{\eta}}\norm{(I-\bar I_{\eta})\mathbf{G}_{\eta}\mathbf{C}_{\eta}}\leq \lambda_{c,\eta}^{L_{\eta}}\norm{\mathbf{G}_{\eta}\mathbf{C}_{\eta}},$
	where the second inequality holds due to $\norm{I-\bar I_{\eta}}=1.$
	It follows from \eqref{eq_notation_error1} that 
$	\norm{M_{2,1}(\eta)}\leq \gamma_{\eta}\lambda_{c,\eta}^{L_{\eta}}\norm{\mathbf{G}_{\eta}\mathbf{C}_{\eta}}$ and 
$\norm{M_{2,2}(\eta)}\leq \lambda_{c,\eta}^{L_{\eta}}(\norm{A+BK}+\norm{\mathbf{G}_{\eta}\mathbf{C}_{\eta}}),$
	which shows that $\norm{M_{2,1}(\eta)}$ and $\norm{M_{2,2}(\eta)}$ are arbitrarily small provided that $L_{\eta}$ is large enough.  We derive that when $L_{\eta}$ is satisfied with \eqref{L_bound}, 
	the results \eqref{pf_03}--\eqref{pf_04} hold
	simultaneously. 
	
	
\end{proof}
\subsection{Proof of Theorem \ref{thm_error_bound}}\label{pf_thm_bounds}
1) First, we prove the first inequality in \eqref{bound_error}. For task $\eta$ and $t\in[t_{\eta-1},t_{\eta}]$, 
define
the following  function 
$\mathbb{V}(t)=\mathbf{e}(t)^{\sf T}\mathcal{P}_{\eta}\mathbf{e}(t),  $
where $\mathcal{P}_{\eta}=\blockdiag\{I_{N_{\eta}}\otimes P_{\eta},I_{N_{\eta}n}\}.$
By  Lemma~\ref{lem_riccati}, $P_{\eta}\succeq I$, which leads to $\lambda_{\max}(\mathcal{P}_{\eta})=\lambda_{\max}(P_{\eta}),
\lambda_{\min}(\mathcal{P}_{\eta})=\lambda_{\min}(P_{\eta}).$
Given $\rho_{\eta}>0$, for $t\in[t_{\eta-1}+1,t_{\eta}]$, it follows from \eqref{lem_extend} that
\begin{align}\label{eq_0}
\mathbb{V}(t)-\mathbb{V}(t-1)\leq \mathbf{e}(t-1)^{\sf T}\hat M(\eta)\mathbf{e}(t-1)+\tilde  M(t)
\end{align}
where 
\begin{align*}
\hat M(\eta)&=(1+\rho_{\eta})\mathbf{M}(\eta)^{\sf T}\mathcal{P}_{\eta}\mathbf{M}(\eta)-\mathcal{P}_{\eta}\nonumber\\
\tilde  M(t)&=(1+1/\rho_{\eta})\mathbf{W}(t)^{\sf T}\mathcal{P}_{\eta}\mathbf{W}(t)\nonumber\\
\mathbf{M}(\eta)&=\begin{pmatrix}
M_{1,1}(\eta)&M_{1,2}(\eta)\\
M_{2,1}(\eta)&M_{2,2}(\eta)
\end{pmatrix}, \mathbf{W}(t)=\begin{pmatrix}
W_{1,1}(t)\\
W_{1,2}(t)
\end{pmatrix}.
\end{align*}

It follows from   the notations in \eqref{eq_notation_error1} that 	$\tilde  M(t)\leq q_{\eta}$, where $q_{\eta}$ is given in \eqref{eq_he}.
According to Lemma \ref{lem_inequality},  
$\hat M(\eta)\preceq \blockdiag\{\bar M_{1,1}(\eta),\bar M_{2,2}(\eta)\},$
where $\bar M_{1,1}(\eta)$ and $\bar M_{2,2}(\eta)$ are given in \eqref{eq_MP}.
It follows from Lemma \ref{lem_parameters} that   $\bar M_{1,1}(\eta)\preceq -\frac{1}{3}I$ and $\bar M_{2,2}(\eta)\preceq -\frac{1}{3}I$. 
From \eqref{eq_0}, we   have 
$\mathbb{V}(t)\leq \varpi_{\eta} \mathbb{V}(t-1)+q_{\eta}$
where 
$\varpi_{\eta}=1-\frac{1}{3\lambda_{\max}(P_{\eta})}\in(0,1).$
For $\eta\geq 2$ and  $t\in[t_{\eta-1}+1,t_{\eta}]$, it holds that 
\begin{align}\label{eq_V}
\mathbb{V}(t)\leq \varpi_{\eta}^{t-t_{\eta-1}} \mathbb{V}(t_{\eta-1})+q_{\eta}\sum_{l=0}^{t-t_{\eta-1}-1}\varpi_{\eta}^l
\end{align}
For $\eta=1$ and  $t\in[1,t_{1}]$,  it holds that 
\begin{align}\label{eq_V2}
\mathbb{V}(t)\leq \varpi_{1}^{t-1} \mathbb{V}(1)+q_{1}\sum_{l=0}^{t-2}\varpi_{1}^l.
\end{align}

Denote $e_{\max}(t,\eta)=\max_{i\in\mathcal{V}_{\eta}}\norm{\hat x_i(t)-x(t)}$ the maximum estimation error of all active sensors for task $\eta$ at time $t$.
According to the form of $\mathbf{e}(t)$, it holds that
\begin{align}
\norm{e_{\max}(t,\eta)}^2&\leq 2\norm{\mathbf{e}_{\net}(t)}^2+2\norm{\mathbf{e}_{\avg}(t)}^2\label{inequl}\\
\norm{e_{\max}(t,\eta)}^2&\geq \max\{\norm{\mathbf{e}_{\avg}(t)}^2/(4N_{\eta}),\norm{\mathbf{e}_{\net}(t)}^2\}.\label{inequl2}
\end{align}
For $t\in[t_{\eta-1}+1,t_{\eta}]$, we derive  
\begin{align}
\mathbb{V}(t)&\geq \lambda_{\min}(P(\eta))(N_{\eta}\norm{\mathbf{e}_{\net}(t)}^2+\gamma_{\eta}^2\norm{\mathbf{e}_{\avg}(t)}^2)\nonumber\\
&\geq 2\lambda_{\min}(P(\eta))(\norm{\mathbf{e}_{\net}(t)}^2+\norm{\mathbf{e}_{\avg}(t)}^2)\nonumber\\
&\geq \lambda_{\min}(P(\eta))\norm{e_{\max}(t,\eta)}^2\label{lower_V}
\end{align}
where the second inequality holds due to $N_{\eta}\geq 2$ and $\gamma_{\eta}\geq \sqrt{2}.$
Since the robot shares the latest estimate with the new active sensor network at   task-switching time   $t_{\eta}$,  it holds that $\norm{e_{\max}(t_{\eta-1},\eta)}\leq \norm{e_{\max}(t_{\eta-1},\eta-1)}.$ 
Then according to the notation of $\mathbb{V}(t_{\eta-1})$ and \eqref{inequl2}, we  derive an upper bound of  $\mathbb{V}(t_{\eta-1})$:
\begin{align}\label{eq_Vt}
\mathbb{V}(t_{\eta-1})\leq \lambda_{\max}(P(\eta))(1+4\gamma_{\eta}^2)N_{\eta}\norm{e_{\max}(t_{\eta-1},\eta-1)}^2.
\end{align}
Similarly, we derive an upper bound of  $\mathbb{V}(1)$:
\begin{align}\label{eq_V1}
\mathbb{V}(1)\leq \lambda_{\max}(P(1))(1+4\gamma_{1}^2)N_{1}\norm{e_{\max}(1,1)}^2.
\end{align}

It follows from \eqref{eq_V}--\eqref{lower_V} that: for  $\eta=1$ and $t\in[2,t_{1}]$, 
$\norm{e_{\max}(t,1)}^2\leq  a_{1}(t-1)\norm{e_{\max}(1,1)}^2+b_{1}(t-1),$
and for $\eta\geq 2$ and $t\in[t_{\eta-1}+1,t_{\eta}]$, 
$\norm{e_{\max}(t,\eta)}^2\leq  a_{\eta}(t)\norm{e_{\max}(t_{\eta-1},\eta-1)}^2+b_{\eta}(t),$
where $a_{\eta}(t)$ and $b_{\eta}(t)$ are introduced in \eqref{eq_he}.
Note that $\norm{e_{\max}(1,1)}\leq q_x$ from Assumption~\ref{ass_ini}, then  $\norm{e_{\max}(t,1)}\leq \bar h_e(t)$ for any $t\geq 1$ from the definition of $\bar h_e(t)$ in Algorithm~\ref{alg:bounds}. 

%



%
2) Next, we prove  the second inequality in \eqref{bound_error}. For $t\in[t_{\eta-1}+1,t_{\eta}],$
  $x(t)$   satisfy the following equation, 
$x(t+1)=(A+BK)x(t)+Bu_{r}(t)+w(t)+BK\big(e_{s_{\eta}}(t)-r(t)\big),$
where    $e_{s_{\eta}}(t)$ is the estimation error of sensor $s_{\eta}$ at time $t$.
Then we have 
$x(t+1)-r(t+1) 
=(A+BK)(x(t)-r(t))+G(t),$
where $G(t)=BKe_{s_{\eta}}(t)+w(t).$
Since $A+BK$ is Schur stable, from Lemma~\ref{lem_riccati} $(A+BK)^{\sf T}P_{*}(A+BK)+I_n=P_{*}$ has a unique finite solution $P_{*}\succ 0 $.
By    \cite[Lemma~3]{he2020secure}, $\forall t\in[t_{\eta-1}+1,t_{\eta}],$  it follows that
\begin{align*}
\norm{x(t)-r(t)}^2
\leq& \frac{\lambda^{t-t_{\eta-1}}\lambda_{\max}(P_*)\norm{x(t_{\eta-1})-r_{\eta-1,\eta}(1)}^2}{\lambda_{\min}(P_*)}\nonumber\\
&+\frac{\beta}{\lambda_{\min}(P_*)}\sum_{l=0}^{t-t_{\eta-1}-1}\lambda^{l}\norm{G(t-1-l)}^2.
\end{align*}
Since $\norm{x(t_{\eta-1})-r_{\eta-1,\eta}(1)}
\leq \norm{x(t_{\eta-1})-r(t_{\eta-1})}+\norm{r_{\eta-1,\eta}(1)-r(t_{\eta-1})}
\leq  \bar h_c(t_{\eta-1})+\norm{r_{\eta-1,\eta}(1)-r(t_{\eta-1})},$
   we accomplish the proof by noting that $\norm{G(t)}\leq \norm{BK}\bar h_e(t)+q_w$, $t_{0}=1$, and $\norm{x(1)-r_{0,1}(1)}\leq q_r$ from Assumption~\ref{ass_ini}.

\bibliographystyle{ieeetr}
\footnotesize
\bibliography{ifacconf}             

\end{document}